\documentclass[journal]{IEEEtran}

\usepackage[utf8]{inputenc}
\usepackage{color}
\usepackage{xcolor}
\usepackage{array}
\usepackage{verbatim}
\usepackage{float}
\usepackage{amsmath}
\usepackage{amsthm}
\usepackage{amssymb}
\usepackage{graphicx}
\usepackage{longtable}
\usepackage{multirow}
\usepackage[unicode=true,
bookmarks=false,
breaklinks=false,pdfborder={0 0 1},colorlinks=false]
{hyperref}
\hypersetup{
	colorlinks,bookmarksopen,bookmarksnumbered,citecolor=blue,urlcolor=blue}
\usepackage{cite}

\floatstyle{ruled}
\newfloat{algorithm}{tbp}{loa}
\providecommand{\algorithmname}{Algorithm}
\floatname{algorithm}{\protect\algorithmname}

\makeatletter
\let\oldforeign@language\foreign@language
\DeclareRobustCommand{\foreign@language}[1]{%
	\lowercase{\oldforeign@language{#1}}}

\let\oldforeign@language\foreign@language
\DeclareRobustCommand{\foreign@language}[1]{%
	\lowercase{\oldforeign@language{#1}}}

\ifCLASSINFOpdf
\else
\fi

\hyphenation{op-tical net-works semi-conduc-tor}


\newcommand{\MYfooter}{\smash{
		\hfil\parbox[t][\height][t]{\textwidth}{\centering
			\thepage}\hfil\hbox{}}}

%
%

\def\ps@IEEEtitlepagestyle{%
	\def\@oddhead{\parbox[t][\height][t]{\textwidth}{\centering \scriptsize
			Personal use of this material is permitted. Permission from the author(s) and/or copyright holder(s), must be obtained for all other uses. Please contact us and provide details if you believe this document breaches copyrights.\\
			\noindent\makebox[\linewidth]{}
		}\hfil\hbox{}}%
	\def\@evenhead{\scriptsize\thepage \hfil \leftmark\mbox{}}%
	\def\@oddfoot{\parbox[t][\height][l]{\textwidth}{
			\vspace{-20pt}{\rule{\textwidth}{0.4pt}}\\ \footnotesize\underline{To cite this article:}
			{\bf{\footnotesize\textcolor{red}{H. A. Hashim and A. E. E. Eltoukhy "Nonlinear Filter for Simultaneous Localization and Mapping on a Matrix Lie Group using IMU and Feature Measurements," IEEE Transactions on Systems, Man, and Cybernetics: Systems, vol. 56, no. 4, pp. 2098-2109, 2022.}}} doi: \href{https://doi.org/10.1109/TSMC.2020.3047338}{10.1109/TSMC.2020.3047338}\\
			\noindent\makebox[\linewidth]
		}\hfil\hbox{}}%
	\def\@evenfoot{\MYfooter}}

\makeatother
\pagestyle{headings}

\thispagestyle{empty}


\newtheorem{defn}{Definition}

\newtheorem{lem}{Lemma}

\newtheorem{thm}{Theorem}
\newtheorem{rem}{Remark}

\newtheorem{assum}{Assumption}

\begin{document}
	\bstctlcite{IEEEexample:BSTcontrol}

	\title{Nonlinear Filter for Simultaneous Localization and Mapping on a Matrix Lie Group using IMU and Feature Measurements}

\author{Hashim A. Hashim$^*$\IEEEmembership{~Member, IEEE} and Abdelrahman E. E. Eltoukhy
	\thanks{This work was supported in part by Thompson Rivers University Internal research fund \# 102315.}
	\thanks{$^*$Corresponding author, H. A. Hashim is with the Department of Engineering and Applied Science, Thompson Rivers University, Kamloops, British Columbia, Canada, V2C-0C8, e-mail: hhashim@tru.ca}
	\thanks{A. E. E. Eltoukhy is with the Department of Industrial and Systems Engineering, The Hong Kong Polytechnic University, Hung Hum, 
	Hong Kong e-mail: abdelrahman.eltoukhy@polyu.edu.hk}
}

%

\maketitle

\begin{abstract}
Simultaneous Localization and Mapping (SLAM) is a process of concurrent
estimation of the vehicle's pose and feature locations with respect
to a frame of reference. This paper proposes a computationally cheap
geometric nonlinear SLAM filter algorithm structured to mimic the
nonlinear motion dynamics of the true SLAM problem posed on the matrix
Lie group of $\mathbb{SLAM}_{n}\left(3\right)$. The nonlinear filter
on manifold is proposed in continuous form and it utilizes available
measurements obtained from group velocity vectors, feature measurements
and an inertial measurement unit (IMU). The unknown bias attached
to velocity measurements is successfully handled by the proposed estimator.
Simulation results illustrate the robustness of the proposed filter
in discrete form demonstrating its utility for the six-degrees-of-freedom
(6 DoF) pose estimation as well as feature estimation in three-dimensional
(3D) space. In addition, the quaternion representation of the nonlinear
filter for SLAM is provided.
\end{abstract}

\begin{IEEEkeywords}
Simultaneous Localization and Mapping, Nonlinear observer algorithm for SLAM, inertial measurement unit, inertial vision system, pose, position, attitude, landmark, estimation, IMU, SE(3), SO(3).
\end{IEEEkeywords}

\IEEEpeerreviewmaketitle{}

\section{Introduction}

\IEEEPARstart{S}{imultaneous} localization and mapping (SLAM) is a critical task that
consists of building a map of an unknown environment while simultaneously
pinpointing the unknown pose (\textit{i.e}, attitude and position)
of the vehicle in three-dimensional (3D) space. SLAM comes into view
when absolute positioning systems, such as global positioning systems
(GPS), are impracticable. It is particularly relevant for applications
performed indoors, underwater, or under harsh weather conditions.
Amongst others, household cleaning devices, security surveillance,
mine exploration, pipelines, location of missing terrestrial and underwater
vehicles, reef monitoring, terrain mapping are all examples of applications
where accurate SLAM is of the essence. Prior knowledge of vehicle
pose, the problem of environment estimation is commonly defined as
a mapping problem which is well-researched by the computer science
and robotics communities \cite{thrun2002robotic}. The reverse problem,
of defining vehicle pose within an established map, is referred to
as pose estimation which has been comprehensively investigated by
the robotics and control community \cite{hashim2019SE3Det,zlotnik2018higher,hashim2020SE3Stochastic}.
SLAM, in turn, constitutes a challenging process of concurrent estimation
of unknown vehicle pose and unknown environment. SLAM problem can
be tackled taking advantage of a set of measurements available with
respect to the body-fixed frame of the moving vehicle. Owing to measurement
contamination with uncertain components, robust filters designed specifically
for the SLAM problem become crucial. Therefore, SLAM has been one
of the core robotics problems for the last three decades and has been
widely explored, for instance \cite{choset2000SLAM,durrant2006simultaneous,bekris2006evaluation,davison2007monoslam,zlotnik2018SLAM,liu2018brain,maurovic2017path,hashim2020SLAMLetter,yuan2019multisensor}.

In robotics, the SLAM problem is traditionally approached using either
a Gaussian or a nonlinear filter. For over a decade, Gaussian approach
was preferred. Several SLAM algorithms were developed on the basis
of Gaussian filters to estimate vehicle state along with the surrounding
features within the map taking uncertainty into consideration. Examples
of Gaussian filters developed for the SLAM problem include FastSLAM
using scalable approach \cite{montemerlo2007fastslam}, unscented
Kalman filter for visual MonoSLAM \cite{davison2007monoslam}, incremental
SLAM \cite{kaess2008isam}, extended Kalman filter (EKF) \cite{huang2007convergence},
neuro-fuzzy EKF \cite{chatterjee2007neuro}, invariant EKF \cite{zhang2017EKF_SLAM},
and others. All of the above approaches are posed in a probabilistic
framework. However, it is important to note that the SLAM problem
is multi-faceted. Commonly addressed aspects of the SLAM problem include
consistency \cite{dissanayake2011review}, high cost computational
complexity \cite{cadena2016past}, poor scalability, environment with
non-static features, and others. Moreover, when approaching the SLAM
problem it is critical to consider: 1) the high complexity of the
pose estimation concerned with vehicles moving in 3D space, 2) the
duality of the problem as it entails both pose and map estimation,
and ultimately 3) its high nonlinearity. In the light of the above
three provisions, firstly, the true SLAM motion dynamics encompass
both vehicle pose and feature dynamics. Secondly, the pose dynamics
of a vehicle moving in 3D space are highly nonlinear, and therefore
are best modeled on the Lie group of the Special Euclidean Group $\mathbb{SE}\left(3\right)$.
And lastly, feature dynamics rely on the vehicle's orientation defined
with respect to the Special Orthogonal Group $\mathbb{SO}\left(3\right)$.
Consequently, owing to the fact that Gaussian filters are based on
linear approximation and are not an optimal fit for the inherently
nonlinear SLAM estimation problem. Nonlinear filters, on the other
hand, can be developed to mimic the true nature of the SLAM problem. 

Taking into consideration the nonlinear nature of the attitude and
pose dynamics, over the past decade, several nonlinear attitude filters
evolved directly on the Lie group of $\mathbb{SO}\left(3\right)$
\cite{lee2012exponential,grip2012attitude,hashim2018SO3Stochastic,hashim2019SO3Wiley},
and pose filters on $\mathbb{SE}\left(3\right)$ \cite{hashim2019SE3Det,zlotnik2018higher,hashim2020SE3Stochastic}
have been proposed. This opened the way for the investigation of the
utility of the Lie group of $\mathbb{SE}\left(3\right)$ for the true
SLAM problem \cite{strasdat2012local}. In recent years, several researchers
have explored nonlinear filters in application to the SLAM problem.
The filter proposed in \cite{johansen2016globally} takes a two-stage
approach, where the first stage consists of vehicle pose estimation
by the means of a nonlinear filter, while the second stage implements
a Kalman filter to obtain feature estimates. The main shortcomings
of the above-mentioned filter are the complexity of having two stages
and inability to explicitly capture the true nonlinear nature of the
SLAM problem. A more recent study proposed nonlinear observers for
SLAM on the matrix Lie group that utilize feature and group velocity
vector measurements directly \cite{zlotnik2018SLAM,hashim2020SLAMLetter}.

Motivated by the previous attempts to capture the complex nature of
the SLAM problem, this work is rooted in the natural nonlinearity
of SLAM and the fact that for $n$ features, SLAM problem is best
modeled on the Lie group of $\mathbb{SLAM}_{n}\left(3\right)$. Taking
advantage of the group velocity vector measurements, availability
of $n$ features, and presence of an inertial measurement unit (IMU),
the contributions of this work are as follows: 
\begin{enumerate}
	\item[1)] A computationally cheap geometric nonlinear deterministic filter
	for SLAM evolved directly on the Lie group of $\mathbb{SLAM}_{n}\left(3\right)$
	and explicitly mimicking the true nature of nonlinear SLAM problem
	is proposed, unlike \cite{johansen2016globally}.
	\item[2)] The nonlinear filter effectively tackles the unknown bias attached
	to the group velocity vector.
	\item[3)] The proposed filter includes gain mapping which allows for cross
	coupling between the innovation of pose and features.
	\item[4)] The presented filter provides asymptotic convergence of the error
	components in the Lyapunov function candidate.
	\item[5)] The error function associated with attitude is guaranteed to be asymptotically
	stable from almost any initial condition, unlike \cite{zlotnik2018SLAM,hashim2020SLAMLetter}.
	\item[6)] A comparison with respect to the previously proposed SLAM filter
	on the Lie group of $\mathbb{SLAM}_{n}\left(3\right)$ is presented.
\end{enumerate}
The remainder of the paper is organized as follows: Section \ref{sec:Preliminaries-and-Math}
presents preliminaries and mathematical notation, the Lie group of
$\mathbb{SO}\left(3\right)$, $\mathbb{SE}\left(3\right)$, and $\mathbb{SLAM}_{n}\left(3\right)$.
Section \ref{sec:SE3_Problem-Formulation} details the SLAM problem,
the true motion kinematics and available measurements. Section \ref{sec:SLAM_Filter}
contains a general nonlinear SLAM filter design followed by the novel
design of the proposed nonlinear filter on $\mathbb{SLAM}_{n}\left(3\right)$.
Section \ref{sec:SE3_Simulations} reveals the effectiveness and robustness
of the proposed filter. Finally, Section \ref{sec:SE3_Conclusion}
summarizes the work.

\section{Preliminaries and Math Notation \label{sec:Preliminaries-and-Math}}

In this paper $\left\{ \mathcal{I}\right\} $ denotes fixed inertial-frame
and $\left\{ \mathcal{B}\right\} $ denotes body-frame fixed to the
moving vehicle. The set of real numbers is denoted by $\mathbb{R}$,
the set of nonnegative real numbers is denoted by $\mathbb{R}_{+}$,
while a $p$-by-$q$ real dimensional space is indicated by $\mathbb{R}^{p\times q}$.
$\mathbf{I}_{p}$ refers to a $p$-by-$p$ identity matrix, $\underline{\mathbf{0}}_{p}$
denotes a zero column vector, and $\left\Vert y\right\Vert =\sqrt{y^{\top}y}$
stands for an Euclidean norm for all $y\in\mathbb{R}^{p}$. $\mathbb{O}\left(3\right)$
represents an orthogonal group that is distinguished by smooth inversion
and multiplication such that
\[
\mathbb{O}\left(3\right)=\left\{ \left.A\in\mathbb{R}^{3\times3}\right|A^{\top}A=AA^{\top}=\mathbf{I}_{3}\right\} 
\]
where $\mathbf{I}_{3}\in\mathbb{R}^{3\times3}$ denotes an identity
matrix. $\mathbb{SO}\left(3\right)$ is a short-hand notation for
the Special Orthogonal Group, a subgroup of $\mathbb{O}\left(3\right)$,
defined as \cite{hashim2018SO3Stochastic,hashim2019SO3Wiley}
\[
\mathbb{SO}\left(3\right)=\left\{ \left.R\in\mathbb{R}^{3\times3}\right|RR^{\top}=R^{\top}R=\mathbf{I}_{3}\text{, }{\rm det}\left(R\right)=+1\right\} 
\]
with ${\rm det\left(\cdot\right)}$ indicating a determinant, and
$R\in\mathbb{SO}\left(3\right)$ denoting orientation, frequently
termed attitude, of a rigid-body in $\left\{ \mathcal{B}\right\} $.
$\mathbb{SE}\left(3\right)$ denotes the Special Euclidean Group defined
by \cite{hashim2020SE3Stochastic}
\[
\mathbb{SE}\left(3\right)=\left\{ \left.\boldsymbol{T}=\left[\begin{array}{cc}
R & P\\
\underline{\mathbf{0}}_{3}^{\top} & 1
\end{array}\right]\in\mathbb{R}^{4\times4}\right|R\in\mathbb{SO}\left(3\right),P\in\mathbb{R}^{3}\right\} 
\]
where $P\in\mathbb{R}^{3}$ denotes position, $R\in\mathbb{SO}\left(3\right)$
denotes orientation, and
\begin{equation}
\boldsymbol{T}=\left[\begin{array}{cc}
R & P\\
\underline{\mathbf{0}}_{3}^{\top} & 1
\end{array}\right]\in\mathbb{SE}\left(3\right)\label{eq:T_SLAM}
\end{equation}
denotes a homogeneous transformation matrix, commonly known as pose,
with $\underline{\mathbf{0}}_{3}$ being a zero column vector. The
term $\boldsymbol{T}$ incorporates the definitions of the rigid-body's
position and orientation in 3D space. The Lie-algebra associated with
$\mathbb{SO}\left(3\right)$ is defined by
\[
\mathfrak{so}\left(3\right)=\left\{ \left.\left[y\right]_{\times}\in\mathbb{R}^{3\times3}\right|\left[y\right]_{\times}^{\top}=-\left[y\right]_{\times},y\in\mathbb{R}^{3}\right\} 
\]
where $\left[y\right]_{\times}$ denotes a skew symmetric matrix and
its map $\left[\cdot\right]_{\times}:\mathbb{R}^{3}\rightarrow\mathfrak{so}\left(3\right)$
as below 
\[
\left[y\right]_{\times}=\left[\begin{array}{ccc}
0 & -y_{3} & y_{2}\\
y_{3} & 0 & -y_{1}\\
-y_{2} & y_{1} & 0
\end{array}\right]\in\mathfrak{so}\left(3\right),\hspace{1em}y=\left[\begin{array}{c}
y_{1}\\
y_{2}\\
y_{3}
\end{array}\right]
\]
Also, for $y,x\in\mathbb{R}^{3}$ one has $\left[y\right]_{\times}x=y\times x$
where $\times$ is a cross product. Analogously to $\mathfrak{so}\left(3\right)$,
let us represent the $\mathbb{SE}\left(3\right)$ Lie-algebra with
$\mathfrak{se}\left(3\right)$ defined by{\small{}
	\[
	\mathfrak{se}\left(3\right)=\left\{ \left[U\right]_{\wedge}\in\mathbb{R}^{4\times4}\left|\exists\Omega,V\in\mathbb{R}^{3}:\left[U\right]_{\wedge}=\left[\begin{array}{cc}
	\left[\Omega\right]_{\times} & V\\
	\underline{\mathbf{0}}_{3}^{\top} & 0
	\end{array}\right]\right.\right\} 
	\]
}where $\left[\cdot\right]_{\wedge}$ denotes a wedge operator and
the wedge map $\left[\cdot\right]_{\wedge}:\mathbb{R}^{6}\rightarrow\mathfrak{se}\left(3\right)$
is 
\[
\left[U\right]_{\wedge}=\left[\begin{array}{cc}
\left[\Omega\right]_{\times} & V\\
\underline{\mathbf{0}}_{3}^{\top} & 0
\end{array}\right]\in\mathfrak{se}\left(3\right),\hspace{1em}U=\left[\begin{array}{c}
\Omega\\
V
\end{array}\right]\in\mathbb{R}^{6}
\]
$\mathbf{vex}:\mathfrak{so}\left(3\right)\rightarrow\mathbb{R}^{3}$
defines the inverse mapping of $\left[\cdot\right]_{\times}$ where
\begin{equation}
\mathbf{vex}\left(\left[y\right]_{\times}\right)=y,\hspace{1em}\forall y\in\mathbb{R}^{3}\label{eq:SLAM_VEX}
\end{equation}
Let $\boldsymbol{\mathcal{P}}_{a}$ define the anti-symmetric projection
on the $\mathfrak{so}\left(3\right)$ Lie-algebra:
\begin{equation}
\boldsymbol{\mathcal{P}}_{a}\left(A\right)=\frac{1}{2}(A-A^{\top})\in\mathfrak{so}\left(3\right),\hspace{1em}\forall A\in\mathbb{R}^{3\times3}\label{eq:SLAM_Pa}
\end{equation}
Additionally, let $\boldsymbol{\Upsilon}\left(\cdot\right)$ stand
for the composition mapping $\boldsymbol{\Upsilon}=\mathbf{vex}\circ\boldsymbol{\mathcal{P}}_{a}$
such that
\begin{equation}
\boldsymbol{\Upsilon}\left(A\right)=\mathbf{vex}\left(\boldsymbol{\mathcal{P}}_{a}\left(A\right)\right)\in\mathbb{R}^{3},\hspace{1em}\forall A\in\mathbb{R}^{3\times3}\label{eq:SLAM_VEX_a}
\end{equation}
$\left\Vert R\right\Vert _{{\rm I}}$ defines the Euclidean distance
of $R\in\mathbb{SO}\left(3\right)$ such that
\begin{equation}
\left\Vert R\right\Vert _{{\rm I}}=\frac{1}{4}{\rm Tr}\{\mathbf{I}_{3}-R\}\in\left[0,1\right]\label{eq:SLAM_Ecul_Dist}
\end{equation}
For any $\boldsymbol{T}\in\mathbb{SE}\left(3\right)$ and $U\in\mathbb{R}^{6}$
given that $\left[U\right]_{\wedge}\in\mathfrak{se}\left(3\right)$,
the adjoint map ${\rm Ad}_{\boldsymbol{T}}:\mathbb{SE}\left(3\right)\times\mathfrak{se}\left(3\right)\rightarrow\mathfrak{se}\left(3\right)$
is defined by
\begin{equation}
{\rm Ad}_{\boldsymbol{T}}\left(\left[U\right]_{\wedge}\right)=\boldsymbol{T}\left[U\right]_{\wedge}\boldsymbol{T}^{-1}\in\mathfrak{se}\left(3\right)\label{eq:SLAM_Adjoint}
\end{equation}
For any homogeneous transformation matrix $\boldsymbol{T}\in\mathbb{SE}\left(3\right)$,
for instance \eqref{eq:T_SLAM}, define an augmented adjoint map $\overline{{\rm Ad}}_{\boldsymbol{T}}:\mathbb{SE}\left(3\right)\rightarrow\mathbb{R}^{6\times6}$
\begin{equation}
\overline{{\rm Ad}}_{\boldsymbol{T}}=\left[\begin{array}{cc}
R & 0_{3\times3}\\
\left[P\right]_{\times}R & R
\end{array}\right]\in\mathbb{R}^{6\times6}\label{eq:SLAM_Adjoint_Aug}
\end{equation}
Thus, from \eqref{eq:SLAM_Adjoint} and \eqref{eq:SLAM_Adjoint_Aug}
it follows that
\begin{equation}
{\rm Ad}_{\boldsymbol{T}}\left(\left[U\right]_{\wedge}\right)=\left[\,\overline{{\rm Ad}}_{\boldsymbol{T}}U\right]_{\wedge},\hspace{1em}\boldsymbol{T}\in\mathbb{SE}\left(3\right),U\in\mathbb{R}^{6}\label{eq:SLAM_Adjoint_MAP}
\end{equation}
Let $\overset{\circ}{\mathcal{M}}$ and $\overline{\mathcal{M}}$
be submanifolds of $\mathbb{R}^{4}$ such that
\begin{align*}
\overset{\circ}{\mathcal{M}} & =\left\{ \left.\overset{\circ}{x}=\left[\begin{array}{cc}
x^{\top} & 0\end{array}\right]^{\top}\in\mathbb{R}^{4}\right|x\in\mathbb{R}^{3}\right\} \\
\overline{\mathcal{M}} & =\left\{ \left.\overline{x}=\left[\begin{array}{cc}
x^{\top} & 1\end{array}\right]^{\top}\in\mathbb{R}^{4}\right|x\in\mathbb{R}^{3}\right\} 
\end{align*}
Define the Lie group $\mathbb{SLAM}_{n}\left(3\right)=\mathbb{SE}\left(3\right)\times\overline{\mathcal{M}}^{n}$
such that
\[
\mathbb{SLAM}_{n}\left(3\right)=\left\{ X=(\boldsymbol{T},\overline{{\rm p}})\left|\boldsymbol{T}\in\mathbb{SE}\left(3\right),\overline{{\rm p}}\in\overline{\mathcal{M}}^{n}\right.\right\} 
\]
where $\overline{{\rm p}}=[\overline{{\rm p}}_{1},\overline{{\rm p}}_{2},\ldots,\overline{{\rm p}}_{n}]\in\overline{\mathcal{M}}^{n}$
and $\overline{\mathcal{M}}^{n}=\overline{\mathcal{M}}\times\overline{\mathcal{M}}\times\cdots\times\overline{\mathcal{M}}$.
The Lie algebra of $\mathbb{SLAM}_{n}\left(3\right)$ which is the
tangent space at the identity element of $X=(\boldsymbol{T},\overline{{\rm p}})\in\mathbb{SLAM}_{n}\left(3\right)$
is denoted by $\mathfrak{slam}_{n}\left(3\right)=\mathfrak{se}\left(3\right)\times\overset{\circ}{\mathcal{M}}^{n}$
and defined by
\[
\mathfrak{slam}_{n}\left(3\right)=\left\{ \mathcal{Y}=(\left[U\right]_{\wedge},\overset{\circ}{{\rm v}})\left|\left[U\right]_{\wedge}\in\mathfrak{se}\left(3\right),\overset{\circ}{{\rm v}}\in\overset{\circ}{\mathcal{M}}^{n}\right.\right\} 
\]
where $\overset{\circ}{{\rm v}}=[\overset{\circ}{{\rm v}}_{1},\overset{\circ}{{\rm v}}_{2},\ldots,\overset{\circ}{{\rm v}}_{n}]\in\overset{\circ}{\mathcal{M}}^{n}$
and $\overset{\circ}{\mathcal{M}}^{n}=\overset{\circ}{\mathcal{M}}\times\overset{\circ}{\mathcal{M}}\times\cdots\times\overset{\circ}{\mathcal{M}}$
such that $\overset{\circ}{{\rm v}}_{i}=\left[{\rm v}_{i}^{\top},0\right]^{\top}\in\overset{\circ}{\mathcal{M}}\forall i=1,\ldots,n$.
The identities below will be used in the forthcoming derivations 
\begin{align}
\left[Ry\right]_{\times}= & R\left[y\right]_{\times}R^{\top},\hspace{1em}y\in{\rm \mathbb{R}}^{3},R\in\mathbb{SO}\left(3\right)\label{eq:SLAM_Identity1}\\
\left[y\times x\right]_{\times}= & xy^{\top}-yx^{\top},\hspace{1em}x,y\in{\rm \mathbb{R}}^{3}\label{eq:SLAM_Identity2}\\
{\rm Tr}\left\{ M\left[y\right]_{\times}\right\} = & {\rm Tr}\left\{ \boldsymbol{\mathcal{P}}_{a}\left(M\right)\left[y\right]_{\times}\right\} ,\hspace{1em}y\in{\rm \mathbb{R}}^{3},M\in\mathbb{R}^{3\times3}\nonumber \\
= & -2\mathbf{vex}\left(\boldsymbol{\mathcal{P}}_{a}\left(M\right)\right)^{\top}y\label{eq:SLAM_Identity6}
\end{align}

\section{SLAM Kinematics and Measurements\label{sec:SE3_Problem-Formulation}}

The complexity of the SLAM consists in the concurrent estimation of
two unknowns: 1) vehicle pose (orientation and position) $\boldsymbol{T}\in\mathbb{SE}\left(3\right)$,
and 2) position of the features within the environment $\overline{{\rm p}}=[\overline{{\rm p}}_{1},\overline{{\rm p}}_{2},\ldots,\overline{{\rm p}}_{n}]\in\overline{\mathcal{M}}^{n}$.
As such, given a set of measurements, SLAM estimation process is comprised
of 1) vehicle pose estimation relative to the features within the
map and simultaneous 2) estimation of the map (positioning of $\overline{{\rm p}}$
within the map). Fig. \ref{fig:SLAM} presents a conceptual representation
of the SLAM problem. 
\begin{figure*}
	\centering{}\includegraphics[scale=0.6]{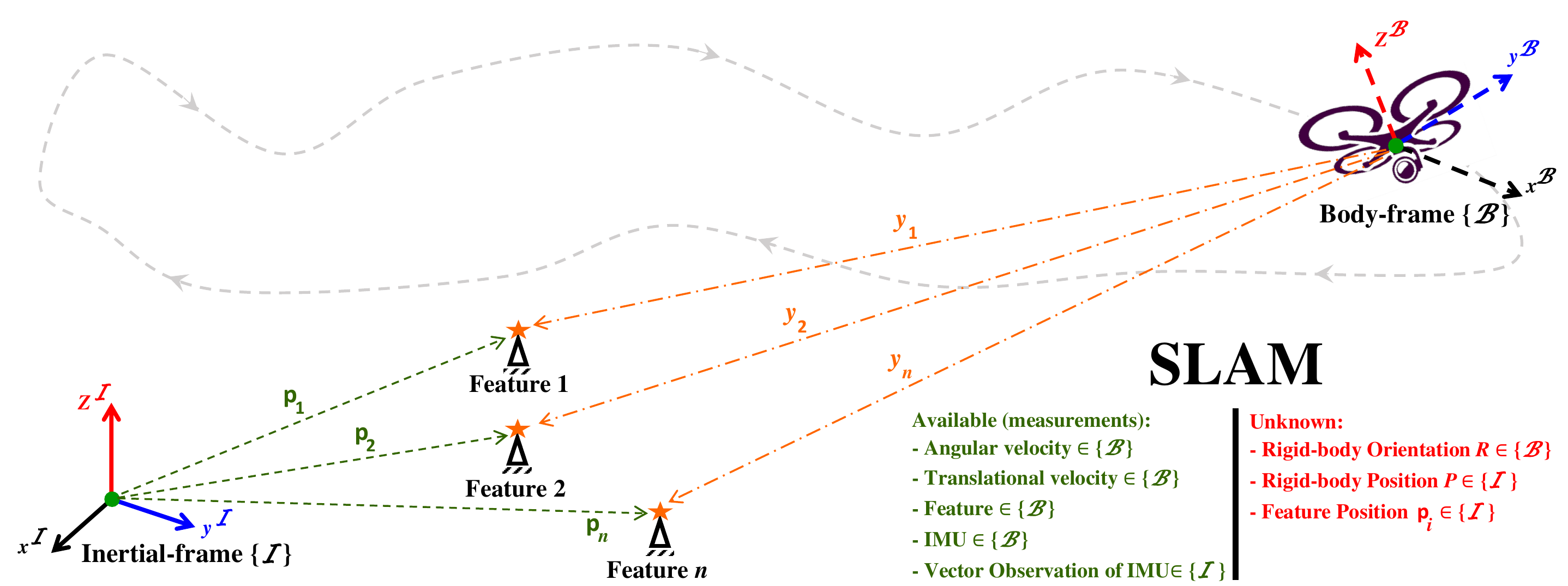}\caption{SLAM estimation problem.}
	\label{fig:SLAM}
\end{figure*}

Let $R\in\mathbb{SO}\left(3\right)$ be the orientation of a rigid-body
and $P\in\mathbb{R}^{3}$ be its translation into 3D space where $R\in\left\{ \mathcal{B}\right\} $
and $P\in\left\{ \mathcal{I}\right\} $. Assume that the map has $n$
features with ${\rm p}_{i}$ being the $i$th feature location for
all $i=1,2,\ldots,n$, and ${\rm p}_{i}\in\left\{ \mathcal{I}\right\} $.
Let $X=(\boldsymbol{T},\overline{{\rm p}})\in\mathbb{SLAM}_{n}\left(3\right)$
represent the true pose of the rigid-body similar to \eqref{eq:T_SLAM}
and features $\overline{{\rm p}}=[\overline{{\rm p}}_{1},\overline{{\rm p}}_{2},\ldots,\overline{{\rm p}}_{n}]\in\overline{\mathcal{M}}^{n}$
where $X$ is unknown. Let $\mathcal{Y}=(\left[U\right]_{\wedge},\overset{\circ}{{\rm v}})\in\mathfrak{slam}_{n}\left(3\right)$
represent the true group velocity that is continuous and bounded such
that $\overset{\circ}{{\rm v}}=[\overset{\circ}{{\rm v}}_{1},\overset{\circ}{{\rm v}}_{2},\ldots,\overset{\circ}{{\rm v}}_{n}]\in\overset{\circ}{\mathcal{M}}^{n}$,
and assume that $\mathcal{Y}$ measurements are readily available.
Therefore, from \eqref{eq:T_SLAM}, the true motion dynamics of the
rigid-body pose and $n$-features can be expressed as
\begin{equation}
\begin{cases}
\dot{\boldsymbol{T}} & =\boldsymbol{T}\left[U\right]_{\wedge}\\
\dot{{\rm p}}_{i} & =R{\rm v}_{i},\hspace{1em}\forall i=1,2,\ldots,n
\end{cases}\label{eq:SLAM_True_dot}
\end{equation}
or, to put simply,
\[
\begin{cases}
\dot{R} & =R\left[\Omega\right]_{\times}\\
\dot{P} & =RV\\
\dot{{\rm p}}_{i} & =R{\rm v}_{i},\hspace{1em}\forall i=1,2,\ldots,n
\end{cases}
\]
where $U=\left[\Omega^{\top},V^{\top}\right]^{\top}\in\mathbb{R}^{6}$
denotes the group velocity vector, $\Omega\in\mathbb{R}^{3}$ and
$V\in\mathbb{R}^{3}$ stand for the true angular and translational
velocity of the rigid-body expressed in the body-frame, respectively,
while ${\rm v}_{i}\in\mathbb{R}^{3}$ represents the $i$th linear
velocity of a feature expressed in the body-frame such that $\Omega,V,{\rm v}_{i}\in\left\{ \mathcal{B}\right\} $.
As has been previously discussed and in accordance with Fig. \ref{fig:SLAM},
$\boldsymbol{T}\in\mathbb{SE}\left(3\right)$ and $\overline{{\rm p}}\in\overline{\mathcal{M}}^{n}$
are unknown. However, the rigid-body (vehicle) is equipped with multiple
sensors that provide us with a set of measurements. The measurements
of angular and translational velocity are given by \cite{hashim2019SE3Det,hashim2020SE3Stochastic}
\begin{equation}
\begin{cases}
\Omega_{m} & =\Omega+b_{\Omega}+n_{\Omega}\in\mathbb{R}^{3}\\
V_{m} & =V+b_{V}+n_{V}\in\mathbb{R}^{3}
\end{cases}\label{eq:SLAM_TVelcoity}
\end{equation}
with $b_{\star}$ and $n_{\star}$ being unknown constant bias and
random noise, respectively, associated with the $\star$ element.
Let $U_{m}=\left[\Omega_{m}^{\top},V_{m}^{\top}\right]^{\top}$, $b_{U}=\left[b_{\Omega}^{\top},b_{V}^{\top}\right]^{\top}$,
and $n_{U}=\left[n_{\Omega}^{\top},n_{V}^{\top}\right]^{\top}$ for
all $U_{m},b_{U},n_{U}\in\mathbb{R}^{6}$ and $U_{m},b_{U},n_{U}\in\left\{ \mathcal{B}\right\} $.
Under the assumption of a static environment adopted in this paper,
$\dot{{\rm p}}_{i}=\underline{\mathbf{0}}_{3}$ and entails that ${\rm v}_{i}=\underline{\mathbf{0}}_{3}$
$\forall i=1,2,\ldots,n$. The body-frame measurements associated
with the orientation determination can be expressed as \cite{hashim2018SO3Stochastic,hashim2019SO3Wiley}
\[
\overset{\circ}{a}_{j}=\boldsymbol{T}^{-1}\overset{\circ}{r}_{j}+\overset{\circ}{b}_{j}^{a}+\overset{\circ}{n}_{j}^{a}\in\overset{\circ}{\mathcal{M}},\hspace{1em}j=1,2,\ldots,n_{R}
\]
or, more simply, 
\begin{equation}
a_{j}=R^{\top}r_{j}+b_{j}^{a}+n_{j}^{a}\in\mathbb{R}^{3}\label{eq:SLAM_Vect_R}
\end{equation}
where $r_{j}$ is the $j$th known inertial-frame vector, $b_{j}^{a}$
is unknown constant bias, and $n_{j}^{a}$ is unknown random noise.
It can be easily found that the inverse of $\boldsymbol{T}$ is $\boldsymbol{T}^{-1}=\left[\begin{array}{cc}
R^{\top} & -R^{\top}P\\
\underline{\mathbf{0}}_{3}^{\top} & 1
\end{array}\right]\in\mathbb{SE}\left(3\right)$. In our analysis, it is assumed that $b_{j}^{a}=n_{j}^{a}=\underline{\mathbf{0}}_{3}$.
Both $r_{j}$ and $a_{j}$ in \eqref{eq:SLAM_Vect_R} can be normalized
and utilized to extract the rigid-body's attitude as
\begin{equation}
\upsilon_{j}^{r}=\frac{r_{j}}{\left\Vert r_{j}\right\Vert },\hspace{1em}\upsilon_{j}^{a}=\frac{a_{j}}{\left\Vert a_{j}\right\Vert }\label{eq:SLAM_Vector_norm}
\end{equation}
Let us group the normalized vectors into the following two sets
\begin{equation}
\begin{cases}
\upsilon^{r} & =[\upsilon_{1}^{r},\upsilon_{2}^{r},\ldots,\upsilon_{n_{R}}^{r}]\in\left\{ \mathcal{I}\right\} \\
\upsilon^{a} & =[\upsilon_{1}^{a},\upsilon_{2}^{a},\ldots,\upsilon_{n_{R}}^{a}]\in\left\{ \mathcal{B}\right\} 
\end{cases}\label{eq:SE3STCH_Set_R_Norm}
\end{equation}

\begin{rem}
	\label{rem:R_Marix}The orientation of a rigid-body can be extracted
	provided that both sets in \eqref{eq:SE3STCH_Set_R_Norm} have a rank
	of 3, indicating that at least two non-collinear vectors in $\left\{ \mathcal{B}\right\} $
	and their observations in $\left\{ \mathcal{I}\right\} $ are obtainable.
	The expression in \eqref{eq:SLAM_Vect_R} exemplifies two measurements
	acquired from a low cost IMU and, while the third data point in both
	$\left\{ \mathcal{B}\right\} $ and $\left\{ \mathcal{I}\right\} $
	can be obtained by means of a cross product $\upsilon_{3}^{a}=\upsilon_{1}^{a}\times\upsilon_{2}^{a}$
	and $\upsilon_{3}^{r}=\upsilon_{1}^{r}\times\upsilon_{2}^{r}$, respectively.
\end{rem}
Obtaining $n$ features in the body-frame can be done through the
utility of low-cost inertial vision units where the $i$th measurement
is defined as
\[
\overline{y}_{i}=\boldsymbol{T}^{-1}\overline{{\rm p}}_{i}+\overset{\circ}{b}_{i}^{y}+\overset{\circ}{n}_{i}^{y}\in\overline{\mathcal{M}},\hspace{1em}\forall i=1,2,\ldots,n
\]
or more simply,
\begin{equation}
y_{i}=R^{\top}({\rm p}_{i}-P)+b_{i}^{y}+n_{i}^{y}\in\mathbb{R}^{3}\label{eq:SLAM_Vec_Landmark}
\end{equation}
where the definitions of $R$, $P$, and ${\rm p}_{i}$ can be found
in \eqref{eq:SLAM_True_dot}, and $b_{i}^{y}$ and $n_{i}^{y}$ are
unknown constant bias and random noise, respectively, for all $y_{i},b_{i}^{y},n_{i}^{y}\in\left\{ \mathcal{B}\right\} $.
In our analysis, it is assumed that $b_{i}^{y}=n_{i}^{y}=\underline{\mathbf{0}}_{3}$.

\begin{assum}\label{Assumption:Feature}Assume that the total number
	of features available for measurement is greater than or equal to
	3 which is a necessity for an unambiguous definition of a plane with
	$\overline{y}=[\overline{y}_{1},\overline{y}_{2},\ldots,\overline{y}_{n}]\in\overline{\mathcal{M}}^{n}$.\end{assum}

\section{Nonlinear Filter Design \label{sec:SLAM_Filter}}

This section presents two nonlinear filter designs for the SLAM problem.
The first nonlinear filter incorporates only the surrounding feature
measurements. The second nonlinear filter design considers measurements
obtained from a typical low cost IMU in addition to the surrounding
feature measurements. Let the estimate of pose be
\[
\hat{\boldsymbol{T}}=\left[\begin{array}{cc}
\hat{R} & \hat{P}\\
\underline{\mathbf{0}}_{3}^{\top} & 1
\end{array}\right]\in\mathbb{SE}\left(3\right)
\]
where $\hat{R}$ and $\hat{P}$ denote estimates of the true orientation
and position, respectively. Let $\hat{{\rm p}}_{i}$ be the estimate
of the true $i$th feature ${\rm p}_{i}$. Define the error between
$\boldsymbol{T}$ and $\hat{\boldsymbol{T}}$ as
\begin{align}
\tilde{\boldsymbol{T}}=\hat{\boldsymbol{T}}\boldsymbol{T}^{-1} & =\left[\begin{array}{cc}
\hat{R} & \hat{P}\\
\underline{\mathbf{0}}_{3}^{\top} & 1
\end{array}\right]\left[\begin{array}{cc}
R^{\top} & -R^{\top}P\\
\underline{\mathbf{0}}_{3}^{\top} & 1
\end{array}\right]\nonumber \\
& =\left[\begin{array}{cc}
\tilde{R} & \tilde{P}\\
\underline{\mathbf{0}}_{3}^{\top} & 1
\end{array}\right]\label{eq:SLAM_T_error}
\end{align}
where $\tilde{R}=\hat{R}R^{\top}$ and $\tilde{P}=\hat{P}-\tilde{R}P$.
The objective of pose estimation is to asymptotically drive $\tilde{\boldsymbol{T}}\rightarrow\mathbf{I}_{4}$
which in turn would cause $\tilde{R}\rightarrow\mathbf{I}_{3}$ and
$\tilde{P}\rightarrow\underline{\mathbf{0}}_{3}$. To this end, define
the error between $\hat{{\rm p}}_{i}$ and ${\rm p}_{i}$ as follows:
\begin{equation}
\overset{\circ}{e}_{i}=\overline{\hat{{\rm p}}}_{i}-\tilde{\boldsymbol{T}}\,\overline{{\rm p}}_{i}\in\overset{\circ}{\mathcal{M}}\label{eq:SLAM_e}
\end{equation}
where $\overline{\hat{{\rm p}}}_{i}=\left[\hat{{\rm p}}_{i}^{\top},1\right]^{\top}\in\overline{\mathcal{M}}$.
In view of \eqref{eq:SLAM_Vec_Landmark}, $\overset{\circ}{e}_{i}=\overline{\hat{{\rm p}}}_{i}-\hat{\boldsymbol{T}}\boldsymbol{T}^{-1}\,\overline{{\rm p}}_{i}$
can be expressed as
\begin{equation}
\overset{\circ}{e}_{i}=\overline{\hat{{\rm p}}}_{i}-\hat{\boldsymbol{T}}\,\overline{y}_{i}=\left[e_{i}^{\top},0\right]^{\top}\label{eq:SLAM_e_Final}
\end{equation}
Thus, it can be found that
\begin{align}
\overset{\circ}{e}_{i} & =\left[\begin{array}{c}
\hat{{\rm p}}_{i}\\
1
\end{array}\right]-\left[\begin{array}{cc}
\hat{R} & \hat{P}\\
\underline{\mathbf{0}}_{3}^{\top} & 1
\end{array}\right]\left[\begin{array}{c}
R^{\top}\left({\rm p}_{i}-P\right)\\
1
\end{array}\right]\nonumber \\
& =\left[\begin{array}{c}
\tilde{{\rm p}}_{i}-\tilde{P}\\
0
\end{array}\right]\label{eq:SLAM_e_tilde}
\end{align}
where $\tilde{{\rm p}}_{i}=\hat{{\rm p}}_{i}-\tilde{R}{\rm p}_{i}$
and $\tilde{P}=\hat{P}-\tilde{R}P$. Consider $n_{\Omega}=n_{V}=\underline{\mathbf{0}}_{3}$
and for the group velocity in \eqref{eq:SLAM_TVelcoity}, let the
estimate of the unknown bias $b_{U}$ be $\hat{b}_{U}=\left[\hat{b}_{\Omega}^{\top},\hat{b}_{V}^{\top}\right]^{\top}$.
Define the error between $b_{U}$ and $\hat{b}_{U}$ as
\begin{equation}
\begin{cases}
\tilde{b}_{\Omega} & =b_{\Omega}-\hat{b}_{\Omega}\\
\tilde{b}_{V} & =b_{V}-\hat{b}_{V}
\end{cases}\label{eq:SLAM_b_error}
\end{equation}
where $\tilde{b}_{U}=\left[\tilde{b}_{\Omega}^{\top},\tilde{b}_{V}^{\top}\right]^{\top}\in\mathbb{R}^{6}$.
Before proceeding, it is important to emphasize that the true SLAM
dynamics in \eqref{eq:SLAM_True_dot} are nonlinear and are modeled
on the Lie group of $\mathbb{SLAM}_{n}\left(3\right)=\mathbb{SE}\left(3\right)\times\overline{\mathcal{M}}^{n}$
with a tangent space of $\mathfrak{slam}_{n}\left(3\right)=\mathfrak{se}\left(3\right)\times\overset{\circ}{\mathcal{M}}^{n}$
where $X=(\boldsymbol{T},\overline{{\rm p}})\in\mathbb{SLAM}_{n}\left(3\right)$
and $\mathcal{Y}=(\left[U\right]_{\wedge},\overset{\circ}{{\rm v}})\in\mathfrak{slam}_{n}\left(3\right)$.
It thus follows logically that an efficient SLAM filter should be
designed to imitate the nonlinearity of the true SLAM problem by modeling
it on the Lie group of $\mathbb{SLAM}_{n}\left(3\right)$ with a tangent
space $\mathfrak{slam}_{n}\left(3\right)$. Accordingly, the proposed
filter has the structure of $\hat{X}=(\hat{\boldsymbol{T}},\overline{\hat{{\rm p}}})\in\mathbb{SLAM}_{n}\left(3\right)$
and $\hat{\mathcal{Y}}=([\hat{U}]_{\wedge},\overset{\circ}{\hat{{\rm v}}})\in\mathfrak{slam}_{n}\left(3\right)$
where $\hat{\boldsymbol{T}}\in\mathbb{SE}\left(3\right)$ and $\overline{\hat{{\rm p}}}=[\overline{\hat{{\rm p}}}_{1},\ldots,\overline{\hat{{\rm p}}}_{n}]\in\overline{\mathcal{M}}^{n}$
are the pose and feature estimates, respectively, while $\hat{U}\in\mathfrak{se}\left(3\right)$
and $\overset{\circ}{\hat{{\rm v}}}=[\overset{\circ}{\hat{{\rm v}}}_{1},\ldots,\overset{\circ}{\hat{{\rm v}}}_{n}]\in\overset{\circ}{\mathcal{M}}^{n}$
are velocities to be designed in the subsequent subsections. Additionally
note that, $\overset{\circ}{\hat{{\rm v}}}_{i}=[\hat{{\rm v}}_{i}^{\top},0]\in\overset{\circ}{\mathcal{M}}$
and $\overline{\hat{{\rm p}}}_{i}=\left[\hat{{\rm p}}_{i}^{\top},1\right]^{\top}\in\overline{\mathcal{M}}$
for all $i=1,2,\cdots,n$.

\subsection{Nonlinear Filter Design without IMU\label{subsec:Det_without_IMU}}

This subsection presents a SLAM nonlinear filter design that operates
based solely on measurements obtained from the surrounding features
along with angular and translational velocities. Consider the following
nonlinear filter evolved directly on $\mathbb{SLAM}_{n}\left(3\right)$:

\begin{align}
\dot{\hat{\boldsymbol{T}}} & =\hat{\boldsymbol{T}}\left[U_{m}-\hat{b}_{U}-W_{U}\right]_{\wedge}\label{eq:SLAM_T_est_dot_f1}\\
W_{U} & =-\sum_{i=1}^{n}k_{w}\overline{{\rm Ad}}_{\hat{\boldsymbol{T}}^{-1}}\left[\begin{array}{c}
\left[\hat{R}y_{i}+\hat{P}\right]_{\times}\\
\mathbf{I}_{3}
\end{array}\right]e_{i}\label{eq:SLAM_W_f1}\\
\dot{\hat{b}}_{U} & =-\sum_{i=1}^{n}\frac{\Gamma}{\alpha_{i}}\overline{{\rm Ad}}_{\hat{\boldsymbol{T}}}^{\top}\left[\begin{array}{c}
\left[\hat{R}y_{i}+\hat{P}\right]_{\times}\\
\mathbf{I}_{3}
\end{array}\right]e_{i}\label{eq:SLAM_b_est_dot_f1}\\
\dot{{\rm \hat{p}}}_{i} & =-k_{1}e_{i},\hspace{1em}i=1,2,\ldots,n\label{eq:SLAM_p_est_dot_f1}
\end{align}
where $k_{w}$, $k_{1}$, $\Gamma$, and $\alpha_{i}$ are positive
constants, $e_{i}$ is as defined in \eqref{eq:SLAM_e_Final}, and
$\overline{{\rm Ad}}_{\hat{\boldsymbol{T}}}=\left[\begin{array}{cc}
\hat{R} & 0_{3\times3}\\{}
[\hat{P}]_{\times}\hat{R} & \hat{R}
\end{array}\right]$ for all $i=1,2,\cdots,n$. Also, $W_{U}=\left[W_{\Omega}^{\top},W_{V}^{\top}\right]^{\top}\in\mathbb{R}^{6}$
is a correction factor and $\hat{b}_{U}=\left[\hat{b}_{\Omega}^{\top},\hat{b}_{V}^{\top}\right]^{\top}\in\mathbb{R}^{6}$
is the estimate of $b_{U}$.
\begin{thm}
	Consider combining the SLAM dynamics $\dot{X}=(\dot{\boldsymbol{T}},\dot{\overline{{\rm p}}})$
	in \eqref{eq:SLAM_True_dot} with feature measurements (output $\overline{y}_{i}=\boldsymbol{T}^{-1}\overline{{\rm p}}_{i}$)
	for all $i=1,2,\ldots,n$ and the velocity measurements ($U_{m}=U+b_{U}$).
	Let Assumption \ref{Assumption:Feature} hold. Let the filter design
	in \eqref{eq:SLAM_T_est_dot_f1}, \eqref{eq:SLAM_W_f1}, \eqref{eq:SLAM_b_est_dot_f1},
	and \eqref{eq:SLAM_p_est_dot_f1} be coupled with the measurements
	of $U_{m}$ and $\overline{y}_{i}$. Consider the design parameters
	$k_{w}$, $k_{1}$, $\Gamma$, and $\alpha_{i}$ to be positive constants
	for all $i=1,2,\ldots,n$, and define the set
	\begin{align}
	\mathcal{S}= & \left\{ (e_{1},e_{2},\ldots,e_{n})\in\mathbb{R}^{3}\times\mathbb{R}^{3}\times\cdots\times\mathbb{R}^{3}\right|\nonumber \\
	& \hspace{10em}\left.e_{i}=\underline{\mathbf{0}}_{3}\forall i=1,2,\ldots,n\right\} \label{eq:SLAM_Set1}
	\end{align}
	Then, 1) the error $e_{i}$ in \eqref{eq:SLAM_e} converges exponentially
	to $\mathcal{S}$, 2) the trajectory of $\tilde{\boldsymbol{T}}$
	remains bounded and 3) there exist constants $R_{c}\in\mathbb{SO}\left(3\right)$
	and $P_{c}\in\mathbb{R}^{3}$ such that $\tilde{R}\rightarrow R_{c}$
	and $\tilde{P}\rightarrow P_{c}$ as $t\rightarrow\infty$.
\end{thm}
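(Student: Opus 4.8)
The plan is to pass to error coordinates, build a Lyapunov function in the feature errors $e_i$ and the velocity-bias error $\tilde b_U$, deduce $e_i\to\underline{\mathbf{0}}_3$, and then obtain the statements on $\tilde{\boldsymbol{T}}$ from a separate geometric argument. Differentiating $\tilde{\boldsymbol{T}}=\hat{\boldsymbol{T}}\boldsymbol{T}^{-1}$ along \eqref{eq:SLAM_True_dot} and \eqref{eq:SLAM_T_est_dot_f1}, using $\tfrac{d}{dt}\boldsymbol{T}^{-1}=-[U]_\wedge\boldsymbol{T}^{-1}$, $U_m-\hat b_U-U=\tilde b_U$, and \eqref{eq:SLAM_Adjoint}--\eqref{eq:SLAM_Adjoint_MAP}, gives $\dot{\tilde{\boldsymbol{T}}}=[\tilde U]_\wedge\tilde{\boldsymbol{T}}$ with $\tilde U:=\overline{{\rm Ad}}_{\hat{\boldsymbol{T}}}(\tilde b_U-W_U)=[\tilde\Omega^{\top},\tilde V^{\top}]^{\top}$, that is $\dot{\tilde R}=[\tilde\Omega]_\times\tilde R$ and $\dot{\tilde P}=[\tilde\Omega]_\times\tilde P+\tilde V$. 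Since $\dot{{\rm p}}_i=\underline{\mathbf{0}}_3$, differentiating \eqref{eq:SLAM_e_tilde} with $\dot{\hat{{\rm p}}}_i=-k_1 e_i$ and using the identity $\hat R y_i+\hat P=\tilde R{\rm p}_i+\tilde P=\hat{{\rm p}}_i-e_i$ (from \eqref{eq:SLAM_Vec_Landmark}, \eqref{eq:SLAM_e_tilde}) together with $[x]_\times x=\underline{\mathbf{0}}_3$ gives
\[
\dot e_i=-k_1 e_i-M_i^{\top}\tilde U,\qquad M_i:=\left[\begin{array}{c}[\hat R y_i+\hat P]_\times\\\mathbf{I}_3\end{array}\right].
\]
As $\overline{{\rm Ad}}$ is a group homomorphism, $\overline{{\rm Ad}}_{\hat{\boldsymbol{T}}}\overline{{\rm Ad}}_{\hat{\boldsymbol{T}}^{-1}}=\mathbf{I}_6$, so \eqref{eq:SLAM_W_f1} yields $\overline{{\rm Ad}}_{\hat{\boldsymbol{T}}}W_U=-k_w\sum_{j}M_j e_j$ and hence $\tilde U=\overline{{\rm Ad}}_{\hat{\boldsymbol{T}}}\tilde b_U+k_w\sum_{j}M_j e_j$; also $\dot{\tilde b}_U=\sum_{i}\tfrac{\Gamma}{\alpha_i}\overline{{\rm Ad}}_{\hat{\boldsymbol{T}}}^{\top}M_i e_i$ from \eqref{eq:SLAM_b_est_dot_f1}.

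\emph{Lyapunov step.} Take $\mathcal{L}=\tfrac12\sum_{i}\alpha_i^{-1}\|e_i\|^2+\tfrac{1}{2\Gamma}\|\tilde b_U\|^2$. Inserting the error kinematics, the cross term $-\big(\sum_i\alpha_i^{-1}M_i e_i\big)^{\top}\overline{{\rm Ad}}_{\hat{\boldsymbol{T}}}\tilde b_U$ coming from $\tilde U$ is cancelled exactly by $\tfrac1\Gamma\tilde b_U^{\top}\dot{\tilde b}_U=\big(\overline{{\rm Ad}}_{\hat{\boldsymbol{T}}}\tilde b_U\big)^{\top}\sum_i\alpha_i^{-1}M_i e_i$ through the $\overline{{\rm Ad}}^{\top}/\overline{{\rm Ad}}$ duality, leaving $\dot{\mathcal{L}}=-k_1\sum_i\alpha_i^{-1}\|e_i\|^2-k_w\big\|\sum_i M_i e_i\big\|^2\le0$ (up to matching the $\alpha_i$ weights in the $k_w$ term). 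Thus $\mathcal{L}$ is non-increasing, $e_i$ and $\tilde b_U$ are bounded, and each $e_i$ is square integrable. For the exponential rate I would stack $E=[e_1^{\top},\ldots,e_n^{\top}]^{\top}$, $\mathcal{M}=[M_1,\ldots,M_n]$ to get $\dot E=-(k_1\mathbf{I}+k_w\mathcal{M}^{\top}\mathcal{M})E-\mathcal{M}^{\top}\overline{{\rm Ad}}_{\hat{\boldsymbol{T}}}\tilde b_U$, and complete the square to obtain $\tfrac{d}{dt}\|E\|^2\le-2k_1\|E\|^2+k_w^{-1}\big\|\overline{{\rm Ad}}_{\hat{\boldsymbol{T}}}\tilde b_U\big\|^2$; hence $E$ is exponentially input-to-state stable with respect to $\overline{{\rm Ad}}_{\hat{\boldsymbol{T}}}\tilde b_U$. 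Assumption \ref{Assumption:Feature} together with the vehicle's motion supplies the excitation that makes $M_i^{\top}\overline{{\rm Ad}}_{\hat{\boldsymbol{T}}}\tilde b_U\to\underline{\mathbf{0}}_3$ force $\tilde b_U\to\underline{\mathbf{0}}_6$, so the forcing signal vanishes and $e_i\to\underline{\mathbf{0}}_3$ (equivalently $e_i\to\mathcal{S}$) at an exponential rate, proving item~1.

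\emph{Behaviour of $\tilde{\boldsymbol{T}}$.} Exponential decay makes each $e_i$ integrable, so $\hat{{\rm p}}_i(t)=\hat{{\rm p}}_i(0)-k_1\int_0^t e_i\,d\tau$ converges to some $\hat{{\rm p}}_{i,\infty}$, and consequently $\tilde R{\rm p}_i+\tilde P=\hat{{\rm p}}_i-e_i\to\hat{{\rm p}}_{i,\infty}$ for every $i$. Under Assumption \ref{Assumption:Feature} (at least three non-collinear features) the map $\psi:\mathbb{SE}(3)\to\mathbb{R}^{3n}$, $\psi(R,P)=(R{\rm p}_1+P,\ldots,R{\rm p}_n+P)$, is continuous, injective and proper, hence a homeomorphism onto its (closed) image. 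Properness confines $(\tilde R,\tilde P)$ to a compact set, so $\tilde{\boldsymbol{T}}$ is bounded (item~2); and since $\psi(\tilde R,\tilde P)$ converges, its limit lies in the image and $(\tilde R,\tilde P)$ converges to its unique $\psi$-preimage, i.e. $\tilde R\to R_c\in\mathbb{SO}(3)$ and $\tilde P\to P_c\in\mathbb{R}^3$ (item~3).

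\emph{Main obstacle.} The hard part is the exponential rate. The feedback directly damps $\sum_i M_i e_i$, but $\tilde b_U$ enters $\dot e_i$ only through $\overline{{\rm Ad}}_{\hat{\boldsymbol{T}}}\tilde b_U$, which is not a priori decaying, so Assumption \ref{Assumption:Feature} has to be invoked as a persistency-of-excitation condition to force $\tilde b_U\to\underline{\mathbf{0}}_6$; in addition, boundedness of $\hat{\boldsymbol{T}}$ — needed so that $M_i$ and $\overline{{\rm Ad}}_{\hat{\boldsymbol{T}}}$ stay bounded in the input-to-state estimate — must be bootstrapped from the geometric step above. Everything else (the homomorphism property of $\overline{{\rm Ad}}$, the identity $\hat R y_i+\hat P=\hat{{\rm p}}_i-e_i$, and the $\overline{{\rm Ad}}^{\top}/\overline{{\rm Ad}}$ duality that cancels the cross terms in $\dot{\mathcal{L}}$) is routine but must be carried out exactly for $\dot{\mathcal{L}}\le0$.
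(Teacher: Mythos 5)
Your core argument coincides with the paper's: the same error dynamics $\dot{\tilde{\boldsymbol{T}}}={\rm Ad}_{\hat{\boldsymbol{T}}}([\tilde{b}_U-W_U]_\wedge)\tilde{\boldsymbol{T}}$ and $\dot e_i=\dot{\hat{{\rm p}}}_i-M_i^{\top}\overline{{\rm Ad}}_{\hat{\boldsymbol{T}}}(\tilde b_U-W_U)$, the same Lyapunov function \eqref{eq:SLAM_Lyap1}, and the same cancellation of the $\tilde b_U$ cross term by the adaptation law \eqref{eq:SLAM_b_est_dot_f1}, arriving at the negative-semidefinite derivative \eqref{eq:SLAM_Lyap1_dot_Final}. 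Where you diverge is in the two tail arguments. First, the paper claims exponential convergence of $e_i$ to \eqref{eq:SLAM_Set1} \emph{directly} from $\dot{\mathcal{L}}\le-\sum_i(k_1/\alpha_i)\|e_i\|^2$ and never asserts $\tilde b_U\to\underline{\mathbf{0}}_6$ in this theorem (the bias error is only shown bounded); your detour through an ISS estimate plus a persistency-of-excitation appeal to force $\tilde b_U\to\underline{\mathbf{0}}_6$ is the one genuinely shaky step in your write-up — as you admit, it is not carried out, and it proves more than the theorem states. To be fair, the paper's own inference of an exponential rate from a $\dot{\mathcal{L}}$ that is only semidefinite in the joint state $(e,\tilde b_U)$ is the standard adaptive-control weak point, so you have correctly located where the rigor is thin rather than introduced a new gap; but you should be aware that the intended proof does not go through $\tilde b_U\to\underline{\mathbf{0}}_6$ at all. (Your parenthetical about ``matching the $\alpha_i$ weights'' is also well spotted: with \eqref{eq:SLAM_W_f1} unweighted and the Lyapunov sum weighted by $1/\alpha_i$, the $k_w$ contribution is $-k_w(\sum_iM_ie_i/\alpha_i)^{\top}(\sum_jM_je_j)$, a perfect square only when the $\alpha_i$ coincide.) Second, your argument for items 2 and 3 — convergence of $\hat{{\rm p}}_i$ from integrability of $e_i$, followed by injectivity and properness of $(R,P)\mapsto(R{\rm p}_1+P,\ldots,R{\rm p}_n+P)$ under Assumption \ref{Assumption:Feature} — is considerably more detailed than the paper's, which essentially asserts $\tilde R\to R_c$ and $\tilde P\to P_c$ once $e_i\to\underline{\mathbf{0}}_3$ and invokes Barbalat for boundedness; your version is the better one, provided you note that the integrability of $e_i$ it relies on is itself inherited from the contested exponential (or at least $L^1$) rate.
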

\begin{proof}Considering the fact that $\boldsymbol{\dot{T}}^{-1}=-\boldsymbol{T}^{-1}\boldsymbol{\dot{T}}\boldsymbol{T}^{-1}$
	and coupling it with the adjoint map in \eqref{eq:SLAM_Adjoint},
	the error dynamics of $\tilde{\boldsymbol{T}}$ defined in \eqref{eq:SLAM_T_error}
	can be expressed as below
	\begin{align}
	\dot{\tilde{\boldsymbol{T}}} & =\dot{\hat{\boldsymbol{T}}}\boldsymbol{T}^{-1}+\hat{\boldsymbol{T}}\dot{\boldsymbol{T}}^{-1}\nonumber \\
	& =\hat{\boldsymbol{T}}\left[U+\tilde{b}_{U}-W_{U}\right]_{\wedge}\boldsymbol{T}^{-1}-\hat{\boldsymbol{T}}\left[U\right]_{\wedge}\boldsymbol{T}^{-1}\nonumber \\
	& ={\rm Ad}_{\hat{\boldsymbol{T}}}\left(\left[\tilde{b}_{U}-W_{U}\right]_{\wedge}\right)\tilde{\boldsymbol{T}}\label{eq:SLAM_T_error_dot}
	\end{align}
	Hence, the error dynamics of $\overset{\circ}{e}_{i}$ in \eqref{eq:SLAM_e}
	become
	\begin{align}
	\overset{\circ}{\dot{e}}_{i} & =\overset{\circ}{\dot{\hat{{\rm p}}}}_{i}-\dot{\tilde{\boldsymbol{T}}}\,\overline{{\rm p}}_{i}-\tilde{\boldsymbol{T}}\,\dot{\overline{{\rm p}}}_{i}\nonumber \\
	& =\overset{\circ}{\dot{\hat{{\rm p}}}}_{i}-{\rm Ad}_{\hat{\boldsymbol{T}}}\left(\left[\tilde{b}_{U}-W_{U}\right]_{\wedge}\right)\tilde{\boldsymbol{T}}\,\overline{{\rm p}}_{i}\label{eq:SLAM_e_dot}
	\end{align}
	Recalling the adjoint expressions in \eqref{eq:SLAM_Adjoint}, \eqref{eq:SLAM_Adjoint_Aug},
	and \eqref{eq:SLAM_Adjoint_MAP}, one finds
	\begin{align*}
	& {\rm Ad}_{\hat{\boldsymbol{T}}}\left(\left[\tilde{b}_{U}-W_{U}\right]_{\wedge}\right)=\left[\overline{{\rm Ad}}_{\hat{\boldsymbol{T}}}(\tilde{b}_{U}-W_{U})\right]_{\wedge}\\
	& \hspace{5em}=\left[\left[\begin{array}{cc}
	\hat{R} & 0_{3\times3}\\{}
	[\hat{P}]_{\times}\hat{R} & \hat{R}
	\end{array}\right]\left[\begin{array}{c}
	\tilde{b}_{\Omega}-W_{\Omega}\\
	\tilde{b}_{V}-W_{V}
	\end{array}\right]\right]_{\wedge}
	\end{align*}
	According to the above result, one obtains
	\begin{align}
	& {\rm Ad}_{\hat{\boldsymbol{T}}}\left(\left[\tilde{b}_{U}-W_{U}\right]_{\wedge}\right)\tilde{\boldsymbol{T}}\,\overline{{\rm p}}_{i}\nonumber \\
	& \hspace{3em}=\left[\begin{array}{cc}
	\left[\hat{R}y_{i}+\hat{P}\right]_{\times} & \underline{\mathbf{0}}_{3}\\
	\mathbf{I}_{3} & \underline{\mathbf{0}}_{3}
	\end{array}\right]^{\top}\overline{{\rm Ad}}_{\hat{\boldsymbol{T}}}\left(\tilde{b}_{U}-W_{U}\right)\label{eq:SLAM_expression1}
	\end{align}
	Therefore, one can rewrite the expression in \eqref{eq:SLAM_e_dot}
	as
	\begin{align*}
	\overset{\circ}{\dot{e}}_{i} & =\overset{\circ}{\dot{\hat{{\rm p}}}}_{i}-\left[\begin{array}{cc}
	\left[\hat{R}y_{i}+\hat{P}\right]_{\times} & \underline{\mathbf{0}}_{3}\\
	\mathbf{I}_{3} & \underline{\mathbf{0}}_{3}
	\end{array}\right]^{\top}\overline{{\rm Ad}}_{\hat{\boldsymbol{T}}}\left(\tilde{b}_{U}-W_{U}\right)
	\end{align*}
	Since the last row consists of zeros, the above expression becomes
	\begin{align}
	\dot{e}_{i} & =\dot{\hat{{\rm p}}}_{i}-\left[\begin{array}{c}
	[\hat{R}y_{i}+\hat{P}]_{\times}\\
	\mathbf{I}_{3}
	\end{array}\right]^{\top}\overline{{\rm Ad}}_{\hat{\boldsymbol{T}}}\left(\tilde{b}_{U}-W_{U}\right)\label{eq:SLAM_e_dot_Final}
	\end{align}
	Define the following candidate Lyapunov function $\mathcal{L}=\mathcal{L}(e_{1},e_{2},\ldots,e_{n},\tilde{b}_{U})$
	\begin{equation}
	\mathcal{L}=\sum_{i=1}^{n}\frac{1}{2\alpha_{i}}e_{i}^{\top}e_{i}+\frac{1}{2}\tilde{b}_{U}^{\top}\Gamma^{-1}\tilde{b}_{U}\label{eq:SLAM_Lyap1}
	\end{equation}
	The time derivative of \eqref{eq:SLAM_Lyap1} becomes
	\begin{align}
	\dot{\mathcal{L}}= & \sum_{i=1}^{n}\frac{1}{\alpha_{i}}e_{i}^{\top}\dot{e}_{i}-\tilde{b}_{U}^{\top}\Gamma^{-1}\dot{\hat{b}}_{U}\nonumber \\
	= & -\sum_{i=1}^{n}\frac{1}{\alpha_{i}}e_{i}^{\top}\left[\begin{array}{c}
	[\hat{R}y_{i}+\hat{P}]_{\times}\\
	\mathbf{I}_{3}
	\end{array}\right]^{\top}\overline{{\rm Ad}}_{\hat{\boldsymbol{T}}}\left(\tilde{b}_{U}-W_{U}\right)\nonumber \\
	& +\sum_{i=1}^{n}\frac{1}{\alpha_{i}}e_{i}^{\top}\dot{\hat{{\rm p}}}_{i}-\tilde{b}_{U}^{\top}\Gamma^{-1}\dot{\hat{b}}_{U}\label{eq:SLAM_Lyap1_dot}
	\end{align}
	Substituting $W_{U}$, $\dot{\hat{b}}_{U}$ and $\dot{\hat{{\rm p}}}_{i}$
	with their definitions in \eqref{eq:SLAM_W_f1}, \eqref{eq:SLAM_b_est_dot_f1},
	and \eqref{eq:SLAM_p_est_dot_f1}, respectively, results in the following
	expression: 
	\begin{align}
	\dot{\mathcal{L}}= & -\sum_{i=1}^{n}\frac{k_{1}}{\alpha_{i}}||e_{i}||^{2}-k_{w}\sum_{i=1}^{n}||e_{i}/\alpha_{i}||^{2}\nonumber \\
	& -k_{w}\left\Vert \sum_{i=1}^{n}\left[\hat{R}y_{i}+\hat{P}\right]_{\times}\frac{e_{i}}{\alpha_{i}}\right\Vert ^{2}\label{eq:SLAM_Lyap1_dot_Final}
	\end{align}
	Consistently with the result obtained in \eqref{eq:SLAM_Lyap1_dot_Final}
	the derivative of $\mathcal{L}$ is negative definite with $\dot{\mathcal{L}}$
	being zero at $e_{i}=\underline{\mathbf{0}}_{3}$. Therefore, the
	result in \eqref{eq:SLAM_Lyap1_dot_Final} ensures that $e_{i}$ converges
	exponentially to the set $\mathcal{S}$ defined in \eqref{eq:SLAM_Set1}.
	On the basis of Barbalat Lemma, $\dot{\mathcal{L}}$ is negative,
	continuous and converges to zero. Thus, $\tilde{\boldsymbol{T}}$
	and $\tilde{b}_{U}$ remains bounded as well as $\ddot{e}_{i}$. Moreover,
	according to \eqref{eq:SLAM_e_tilde}, $e_{i}\rightarrow\underline{\mathbf{0}}_{3}$
	implies that $\tilde{{\rm p}}_{i}-\tilde{P}\rightarrow\underline{\mathbf{0}}_{3}$
	which in turn, based on \eqref{eq:SLAM_e}, leads to $\overline{\hat{{\rm p}}}_{i}-\tilde{\boldsymbol{T}}\,\overline{{\rm p}}_{i}\rightarrow\underline{\mathbf{0}}_{3}$.
	Therefore, $\tilde{\boldsymbol{T}}$ is bounded, while $\tilde{R}\rightarrow R_{c}$
	and $\tilde{P}\rightarrow P_{c}$ as $t\rightarrow\infty$. This completes
	the proof.\end{proof}

Let $\Delta t$ denote a small sample time. Algorithm \ref{alg:Alg_Disc-1}
presents the complete steps of implementation of the continuous filter
in \eqref{eq:SLAM_T_est_dot_f1}-\eqref{eq:SLAM_p_est_dot_f1} in
discrete form. $\exp$ in Algorithm \ref{alg:Alg_Disc-1} denotes
exponential of a matrix which is defined in MATLAB as ``expm''.

\begin{algorithm}
	\caption{\label{alg:Alg_Disc-1}Discrete nonlinear filter for SLAM without
		IMU described in Subsection \ref{subsec:Det_without_IMU}}
	
	\textbf{Initialization}:
	\begin{enumerate}
		\item[{\footnotesize{}1:}] Set $\hat{R}[0]\in\mathbb{SO}\left(3\right)$ and $\hat{P}[0]\in\mathbb{R}^{3}$.
		Instead, construct $\hat{R}[0]\in\mathbb{SO}\left(3\right)$ using
		one method of attitude determination, visit \cite{hashim2020AtiitudeSurvey}\vspace{1mm}
		\item[{\footnotesize{}2:}] Set ${\rm \hat{p}}_{i}[0]\in\mathbb{R}^{3}$ for all $i=1,2,\ldots,n$\vspace{1mm}
		\item[{\footnotesize{}3:}] Set $\hat{b}_{U}[0]=0_{6\times1}$ \vspace{1mm}
		\item[{\footnotesize{}4:}] Select $k_{w}$, $k_{1}$, $\Gamma$, and $\alpha_{i}$ as positive
		constants, and the sample $k=0$
	\end{enumerate}
	\textbf{while}
	\begin{enumerate}
		\item[{\footnotesize{}5:}] {\small{}$\overline{{\rm Ad}}_{\hat{\boldsymbol{T}}^{-1}}=\left[\begin{array}{cc}
			\hat{R}[k]^{\top} & 0_{3\times3}\\
			-\hat{R}[k]^{\top}\left[\hat{P}[k]\right]_{\times} & \hat{R}[k]^{\top}
			\end{array}\right]$ }and\\
		{\small{}$\overline{{\rm Ad}}_{\hat{\boldsymbol{T}}}^{\top}=\left[\begin{array}{cc}
			\hat{R}[k]^{\top} & -\hat{R}[k]^{\top}\left[\hat{P}[k]\right]_{\times}\\
			0_{3\times3} & \hat{R}[k]^{\top}
			\end{array}\right]$ }\vspace{1mm}
		\item[{\footnotesize{}6:}] \textbf{for} $i=1:n$\vspace{1mm}
		\item[{\footnotesize{}7:}] \hspace{0.5cm}$e_{i}[k]=\hat{{\rm p}}_{i}[k]-\hat{R}[k]y_{i}[k]-\hat{P}[k]$
		as in \eqref{eq:SLAM_e_Final}\vspace{1mm}
		\item[{\footnotesize{}8:}] \textbf{end for}
		\item[] \textcolor{blue}{/{*} Filter design \& update step {*}/}
		\item[{\footnotesize{}9:}] {\small{}$W_{U}[k]=-\sum_{i=1}^{n}k_{w}\overline{{\rm Ad}}_{\hat{\boldsymbol{T}}^{-1}}\left[\begin{array}{c}
			\left[\hat{R}[k]y_{i}[k]+\hat{P}[k]\right]_{\times}\\
			\mathbf{I}_{3}
			\end{array}\right]e_{i}[k]$}\vspace{1mm}
		\item[{\footnotesize{}10:}] $\hat{\boldsymbol{T}}[k+1]=\hat{\boldsymbol{T}}[k]\exp\left([U_{m}[k]-\hat{b}_{U}[k]-W_{U}[k]]_{\wedge}\Delta t\right)$\vspace{1mm}
		\item[{\footnotesize{}11:}] $\hat{b}_{U}[k+1]=\hat{b}_{U}[k]$\\
		$-\sum_{i=1}^{n}\frac{\Gamma\Delta t}{\alpha_{i}}\overline{{\rm Ad}}_{\hat{\boldsymbol{T}}}^{\top}\left[\begin{array}{c}
		\left[\hat{R}[k]y_{i}[k]+\hat{P}[k]\right]_{\times}\\
		\mathbf{I}_{3}
		\end{array}\right]e_{i}[k]$\vspace{1mm}
		\item[{\footnotesize{}12:}] \textbf{for} $i=1:n$\vspace{1mm}
		\item[{\footnotesize{}13:}] \hspace{0.5cm}${\rm \hat{p}}_{i}[k+1]={\rm \hat{p}}_{i}[k]-\Delta tk_{1}e_{i}[k]$\vspace{1mm}
		\item[{\footnotesize{}14:}] \textbf{end for}\vspace{1mm}
		\item[{\footnotesize{}15:}] $k=k+1$
	\end{enumerate}
	\textbf{end while}
\end{algorithm}

\subsection{Nonlinear Filter Design with IMU\label{subsec:Det_with_IMU}}

The nonlinear filter design presented in Subsection \ref{subsec:Det_without_IMU}
allows $\overset{\circ}{e}_{i}\rightarrow\underline{\mathbf{0}}_{4}$
causing $\tilde{{\rm p}}_{i}-\tilde{P}\rightarrow\underline{\mathbf{0}}_{3}$
exponentially. However, $\tilde{R}\rightarrow R_{c}$ and $\tilde{P}\rightarrow P_{c}$
as $t\rightarrow\infty$ such that $R_{c}\in\mathbb{SO}\left(3\right)$
and $P_{c}\in\mathbb{R}^{3}$ are constants. Recall that $\tilde{P}=\hat{P}-\tilde{R}P$
and $\tilde{{\rm p}}_{i}=\hat{{\rm p}}_{i}-\tilde{R}{\rm p}_{i}$.
Accordingly, if the initial pose of the rigid-body ($R\left(0\right)$
and $P\left(0\right)$) is not accurately known, despite $\overset{\circ}{e}_{i}\rightarrow\underline{\mathbf{0}}_{4}$
exponentially, the error between the following pairs of values will
be very significant: $\hat{R}\left(\infty\right)$ and $R\left(\infty\right)$,
$\hat{P}\left(\infty\right)$ and $P\left(\infty\right)$, and $\hat{{\rm p}}_{i}\left(\infty\right)$
and ${\rm p}_{i}\left(\infty\right)$. As such, the estimates of pose
and feature positions will be highly inaccurate. This is the case
in previously proposed solutions, for instance \cite{zlotnik2018SLAM,hashim2020SLAMLetter}.
\begin{rem}
	\label{rem:SLAM-Observability}SLAM problem is not observable \cite{lee2006SLAM_observability}.
	Let $R_{c}\in\mathbb{SO}\left(3\right)$ and $P_{c}\in\mathbb{R}^{3}$
	be constants. The best achievable result is $\tilde{R}\rightarrow R_{c}$,
	$\tilde{P}\rightarrow P_{c}$, and $\hat{{\rm p}}_{i}\rightarrow\hat{P}+\tilde{R}{\rm p}_{i}-\tilde{R}P$
	as $t\rightarrow\infty$.
\end{rem}
Motivated by the above discussion, this section aims to propose a
nonlinear SLAM filter design that demonstrates reasonable performance
irrespective of the accuracy of the initial pose and feature locations.
The proposed design makes use of the available velocity, IMU, and
feature measurements. Recall the body-frame measurements in \eqref{eq:SLAM_Vect_R}
and their normalization in \eqref{eq:SLAM_Vector_norm}. Let
\begin{equation}
M=M^{\top}=\sum_{j=1}^{n_{{\rm R}}}s_{j}\upsilon_{j}^{r}\left(\upsilon_{j}^{r}\right)^{\top},\hspace{1em}\forall j=1,2,\ldots,n_{{\rm R}}\label{eq:SLAM_M}
\end{equation}
where $s_{j}\geq0$ stands for a constant gain and represents the
confidence level of the $j$th sensor measurements. According to \eqref{eq:SLAM_M},
$M$ is symmetric. In consistence with Remark \ref{rem:R_Marix},
it is assumed that there are at least two body-frame measurements
and their inertial-frame observations are available as well as non-collinear.
Thereby, ${\rm rank}(M)=3$. Let the eigenvalues of $M$ be $\lambda(M)=\{\lambda_{1},\lambda_{2},\lambda_{3}\}$.
Hence, each eigenvalue is positive. Define $\breve{\mathbf{M}}={\rm Tr}\{M\}\mathbf{I}_{3}-M$,
provided that ${\rm rank}(M)=3$. Thus, ${\rm rank}(\breve{\mathbf{M}})=3$
and it can be concluded that (\cite{bullo2004geometric} page. 553): 
\begin{enumerate}
	\item $\breve{\mathbf{M}}$ is positive-definite.
	\item $\breve{\mathbf{M}}$ has the following eigenvalues $\lambda(\breve{\mathbf{M}})=\{\lambda_{3}+\lambda_{2},\lambda_{3}+\lambda_{1},\lambda_{2}+\lambda_{1}\}$
	where the minimum eigenvalue (singular value) $\underline{\lambda}(\breve{\mathbf{M}})>0$. 
\end{enumerate}
The rest of this subsection assumes that ${\rm rank}\left(M\right)=3$.
Also, for $j=1,2,\ldots,n_{{\rm R}}$, $s_{j}$ is selected such that
$\sum_{j=1}^{n_{{\rm R}}}s_{j}=3$. This means that ${\rm Tr}\left\{ M\right\} =3$.
The following Lemma will prove useful in the reminder of this subsection.
\begin{lem}
	\label{Lemm:SLAM_Lemma1}Let $\tilde{R}\in\mathbb{SO}\left(3\right)$,
	$M=M^{\top}\in\mathbb{R}^{3\times3}$ with ${\rm rank}(M)=3$ and
	${\rm Tr}\{M\}=3$. Let $\breve{\mathbf{M}}={\rm Tr}\{M\}\mathbf{I}_{3}-M$
	and $\underline{\lambda}=\underline{\lambda}(\breve{\mathbf{M}})$
	denote the minimum singular value of $\breve{\mathbf{M}}$. Then,
	one has
	\begin{align}
	||\tilde{R}M||_{{\rm I}} & \leq\frac{2}{\underline{\lambda}}\frac{||\mathbf{vex}\left(\boldsymbol{\mathcal{P}}_{a}(\tilde{R}M)\right)||^{2}}{1+{\rm Tr}\{\tilde{R}MM^{-1}\}}\label{eq:SLAM_lemm1_2}
	\end{align}
	\textbf{Proof. See \cite{hashim2019SO3Wiley}}%
	\textbf{.} 
\end{lem}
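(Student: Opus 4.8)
The plan is to reduce the matrix inequality to a scalar one through the angle--axis (Rodrigues) parametrization of $\tilde{R}$. First I would write $\tilde{R}=\mathbf{I}_{3}+\sin\theta\,[k]_{\times}+(1-\cos\theta)[k]_{\times}^{2}$ with $\|k\|=1$ and $\theta\in[0,\pi]$, and recall $[k]_{\times}^{2}=kk^{\top}-\mathbf{I}_{3}$. Since ${\rm Tr}\{M\}=3$, the left-hand side is $\|\tilde{R}M\|_{{\rm I}}=\tfrac{1}{4}{\rm Tr}\{(\mathbf{I}_{3}-\tilde{R})M\}$; the skew-times-symmetric term $\sin\theta\,[k]_{\times}M$ is traceless and ${\rm Tr}\{[k]_{\times}^{2}M\}=k^{\top}Mk-{\rm Tr}\{M\}=-k^{\top}\breve{\mathbf{M}}k$, which yields
\[
\|\tilde{R}M\|_{{\rm I}}=\tfrac{1}{4}(1-\cos\theta)\,k^{\top}\breve{\mathbf{M}}k .
\]

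Then I would compute $\mathbf{vex}(\boldsymbol{\mathcal{P}}_{a}(\tilde{R}M))$ by pairing it with an arbitrary $y\in\mathbb{R}^{3}$ through identity \eqref{eq:SLAM_Identity6}, namely ${\rm Tr}\{\tilde{R}M[y]_{\times}\}=-2\,\mathbf{vex}(\boldsymbol{\mathcal{P}}_{a}(\tilde{R}M))^{\top}y$. Expanding $\tilde{R}$ by Rodrigues and using the cyclic property together with $[u]_{\times}[v]_{\times}=vu^{\top}-(u^{\top}v)\mathbf{I}_{3}$, $[k]_{\times}^{2}=kk^{\top}-\mathbf{I}_{3}$, and the scalar triple product $k^{\top}M(y\times k)=y^{\top}(k\times Mk)=-y^{\top}[k]_{\times}\breve{\mathbf{M}}k$, every piece collapses onto $\breve{\mathbf{M}}k$ and one obtains
\[
2\,\mathbf{vex}(\boldsymbol{\mathcal{P}}_{a}(\tilde{R}M))=\sin\theta\,\breve{\mathbf{M}}k+(1-\cos\theta)\,[k]_{\times}\breve{\mathbf{M}}k .
\]
Since $\breve{\mathbf{M}}k$ and $[k]_{\times}\breve{\mathbf{M}}k$ are orthogonal, squaring gives $4\|\mathbf{vex}(\boldsymbol{\mathcal{P}}_{a}(\tilde{R}M))\|^{2}=\sin^{2}\theta\,\|\breve{\mathbf{M}}k\|^{2}+(1-\cos\theta)^{2}\|[k]_{\times}\breve{\mathbf{M}}k\|^{2}$.

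Next I would assemble the bound. Observe ${\rm Tr}\{\tilde{R}MM^{-1}\}={\rm Tr}\{\tilde{R}\}=1+2\cos\theta$, so $1+{\rm Tr}\{\tilde{R}MM^{-1}\}=2(1+\cos\theta)$; clearing this denominator and using $(1-\cos\theta)(1+\cos\theta)=\sin^{2}\theta$, the claimed inequality becomes, for $\theta\in(0,\pi)$, equivalent to
\[
\underline{\lambda}\,k^{\top}\breve{\mathbf{M}}k\le\|\breve{\mathbf{M}}k\|^{2}+\tfrac{(1-\cos\theta)^{2}}{\sin^{2}\theta}\,\|[k]_{\times}\breve{\mathbf{M}}k\|^{2}.
\]
As the second term on the right is nonnegative, it suffices to prove $\underline{\lambda}\,k^{\top}\breve{\mathbf{M}}k\le k^{\top}\breve{\mathbf{M}}^{2}k$, which holds because $\breve{\mathbf{M}}$ is symmetric positive-definite with smallest eigenvalue $\underline{\lambda}$, hence $\breve{\mathbf{M}}^{2}-\underline{\lambda}\breve{\mathbf{M}}=\breve{\mathbf{M}}(\breve{\mathbf{M}}-\underline{\lambda}\mathbf{I}_{3})\succeq0$. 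The value $\theta=0$ gives $\|\tilde{R}M\|_{{\rm I}}=0$ and is trivial, whereas $\theta=\pi$ is the antipodal configuration at which the denominator vanishes and must be excluded or handled as a limit.

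I would expect the main obstacle to be the second step: the commutator and triple-product identities have to be combined so that the antisymmetric part of $\tilde{R}M$ reduces cleanly to an expression in $\breve{\mathbf{M}}k$ alone, and one must state precisely the admissible set of $\tilde{R}$ that keeps $1+{\rm Tr}\{\tilde{R}\}$ strictly positive so that the quotient is well defined.
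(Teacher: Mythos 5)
Your proof is correct and complete. The paper gives no in-line proof of this lemma (it defers to \cite{hashim2019SO3Wiley}), and the argument there follows the same route you take: the angle--axis parametrization yielding $\|\tilde{R}M\|_{{\rm I}}=\tfrac{1}{4}(1-\cos\theta)\,k^{\top}\breve{\mathbf{M}}k$ and $2\,\mathbf{vex}(\boldsymbol{\mathcal{P}}_{a}(\tilde{R}M))=\sin\theta\,\breve{\mathbf{M}}k+(1-\cos\theta)[k]_{\times}\breve{\mathbf{M}}k$, followed by $1+{\rm Tr}\{\tilde{R}\}=2(1+\cos\theta)$ and the eigenvalue bound $\underline{\lambda}\,k^{\top}\breve{\mathbf{M}}k\le k^{\top}\breve{\mathbf{M}}^{2}k$. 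Your caveat about $\theta=\pi$ is legitimate --- the lemma statement glosses over it --- but it is consistent with how the result is used downstream, where the set $\mathcal{U}_{s}$ with ${\rm Tr}\{\tilde{R}\}=-1$ is explicitly excluded.
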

\begin{defn}
	\label{def:Unstable-set}Define $\mathcal{U}_{s}$ as a subset of
	$\mathbb{SO}\left(3\right)$ which is a non-attractive and forward
	invariant unstable set such that
	\begin{equation}
	\mathcal{U}_{s}=\left\{ \left.\tilde{R}\left(0\right)\in\mathbb{SO}\left(3\right)\right|{\rm Tr}\{\tilde{R}\left(0\right)\}=-1\right\} \label{eq:SO3_PPF_STCH_SET}
	\end{equation}
	with $\tilde{R}\left(0\right)={\rm diag}(1,-1,-1)$, $\tilde{R}\left(0\right)={\rm diag}(-1,1,-1)$,
	or $\tilde{R}\left(0\right)={\rm diag}(-1,-1,1)$ representing the
	only three possible scenarios for $\tilde{R}\left(0\right)\in\mathcal{U}_{s}$.
\end{defn}
The objective of this work is to propose a filter design that relies
on a set of measurements. Therefore, it is important to introduce
the following variables with respect to vector measurements. Recall
\eqref{eq:SLAM_Vect_R} and \eqref{eq:SLAM_Vector_norm}. Since the
true normalized value of the $j$th body-frame vector is equivalent
to $\upsilon_{j}^{a}=R^{\top}\upsilon_{j}^{r}$, define
\begin{equation}
\hat{\upsilon}_{j}^{a}=\hat{R}^{\top}\upsilon_{j}^{r},\hspace{1em}\forall j=1,2,\ldots,n_{{\rm R}}\label{eq:SLAM_vect_R_estimate}
\end{equation}
Let the error in pose be similar to \eqref{eq:SLAM_T_error} such
that $\tilde{R}=\hat{R}R^{\top}$. From the identities in \eqref{eq:SLAM_Identity1}
and \eqref{eq:SLAM_Identity2}, one obtains
\begin{align*}
\left[\hat{R}\sum_{j=1}^{n_{{\rm R}}}\frac{s_{j}}{2}\hat{\upsilon}_{j}^{a}\times\upsilon_{j}^{a}\right]_{\times} & =\hat{R}\sum_{j=1}^{n_{{\rm R}}}\frac{s_{j}}{2}\left(\upsilon_{j}^{a}\left(\hat{\upsilon}_{j}^{a}\right)^{\top}-\hat{\upsilon}_{j}^{a}\left(\upsilon_{j}^{a}\right)^{\top}\right)\hat{R}^{\top}\\
& =\frac{1}{2}\hat{R}R^{\top}M-\frac{1}{2}MR\hat{R}^{\top}\\
& =\boldsymbol{\mathcal{P}}_{a}(\tilde{R}M)
\end{align*}
This implies that $\mathbf{vex}(\boldsymbol{\mathcal{P}}_{a}(\tilde{R}M))$
can be expressed with respect to vector measurements as
\begin{equation}
\boldsymbol{\Upsilon}(\tilde{R}M)=\mathbf{vex}(\boldsymbol{\mathcal{P}}_{a}(\tilde{R}M))=\hat{R}\sum_{j=1}^{n_{{\rm R}}}\left(\frac{s_{j}}{2}\hat{\upsilon}_{j}^{a}\times\upsilon_{j}^{a}\right)\label{eq:SLAM_VEX_VM}
\end{equation}
Hence, $\tilde{R}M$ may be expressed in terms of vector measurements
as
\begin{equation}
\tilde{R}M=\hat{R}\sum_{j=1}^{n_{{\rm R}}}\left(s_{j}\upsilon_{j}^{a}\left(\upsilon_{j}^{r}\right)^{\top}\right)\label{eq:SLAM_RM_VM}
\end{equation}
Due to the fact that ${\rm Tr}\left\{ M\right\} =3$ and in view of
the normalized Euclidean distance definition in \eqref{eq:SLAM_Ecul_Dist},
one finds
\begin{align}
||\tilde{R}M||_{{\rm I}} & =\frac{1}{4}{\rm Tr}\{(\mathbf{I}_{3}-\tilde{R})M\}\nonumber \\
& =\frac{1}{4}{\rm Tr}\left\{ \mathbf{I}_{3}-\hat{R}\sum_{j=1}^{n_{{\rm R}}}\left(s_{j}\upsilon_{j}^{a}\left(\upsilon_{j}^{r}\right)^{\top}\right)\right\} \nonumber \\
& =\frac{1}{4}\sum_{j=1}^{n_{{\rm R}}}\left(1-s_{j}\left(\hat{\upsilon}_{j}^{a}\right)^{\top}\upsilon_{j}^{a}\right)\label{eq:SLAM_RI_VM}
\end{align}
According to \eqref{eq:SLAM_Ecul_Dist}, one obtains 
\begin{align}
1-||\tilde{R}||_{{\rm I}} & =1-\frac{1}{4}{\rm Tr}\left\{ \mathbf{I}_{3}-\tilde{R}\right\} =1-\frac{3}{4}+\frac{1}{4}{\rm Tr}\{\tilde{R}\}\nonumber \\
& =\frac{1}{4}\left(1+{\rm Tr}\{\tilde{R}\}\right)\label{eq:SLAM_Property}
\end{align}
From \eqref{eq:SLAM_Property}, it becomes apparent that
\begin{align}
1-||\tilde{R}||_{{\rm I}} & =\frac{1}{4}\left(1+{\rm Tr}\{\tilde{R}MM^{-1}\}\right)\label{eq:SLAM_property2}
\end{align}
From \eqref{eq:SLAM_property2} and \eqref{eq:SLAM_RM_VM}, one has
\begin{align}
& \pi(\tilde{R},M)={\rm Tr}\{\tilde{R}MM^{-1}\}\nonumber \\
& \hspace{0.3em}={\rm Tr}\left\{ \left(\sum_{j=1}^{n_{{\rm R}}}s_{j}\upsilon_{j}^{a}\left(\upsilon_{j}^{r}\right)^{\top}\right)\left(\sum_{j=1}^{n_{{\rm R}}}s_{j}\hat{\upsilon}_{j}^{a}\left(\upsilon_{j}^{r}\right)^{\top}\right)^{-1}\right\} \label{eq:SLAM_Gamma_VM}
\end{align}
Consider the following nonlinear filter evolved directly on $\mathbb{SLAM}_{n}\left(3\right)$

\begin{align}
\dot{\hat{\boldsymbol{T}}} & =\hat{\boldsymbol{T}}\left[U_{m}-\hat{b}_{U}-W_{U}\right]_{\wedge}\label{eq:SLAM_T_est_dot_f2}\\
\tau_{R} & =\underline{\lambda}(\breve{\mathbf{M}})\times(1+\pi(\tilde{R},M))\label{eq:SLAM_TauR}\\
W_{U} & =\sum_{i=1}^{n}\frac{1}{\alpha_{i}}\left[\begin{array}{cc}
\frac{k_{w}\alpha_{i}}{\tau_{R}}\hat{R}^{\top} & 0_{3\times3}\\
0_{3\times3} & -k_{2}\hat{R}^{\top}
\end{array}\right]\left[\begin{array}{c}
\boldsymbol{\Upsilon}(\tilde{R}M)\\
e_{i}
\end{array}\right]\label{eq:SLAM_W_f2}\\
\dot{\hat{b}}_{U} & =\sum_{i=1}^{n}\frac{\Gamma}{\alpha_{i}}\left[\begin{array}{cc}
\frac{\alpha_{i}}{2}\hat{R}^{\top} & -\left[y_{i}\right]_{\times}\hat{R}^{\top}\\
0_{3\times3} & -\hat{R}^{\top}
\end{array}\right]\left[\begin{array}{c}
\boldsymbol{\Upsilon}(\tilde{R}M)\\
e_{i}
\end{array}\right]\label{eq:SLAM_b_est_dot_f2}\\
\dot{{\rm \hat{p}}}_{i} & =-k_{1}e_{i}+\hat{R}\left[y_{i}\right]_{\times}W_{\Omega},\hspace{1em}i=1,2,\ldots,n\label{eq:SLAM_p_est_dot_f2}
\end{align}
where $W_{U}=\left[W_{\Omega}^{\top},W_{V}^{\top}\right]^{\top}\in\mathbb{R}^{6}$
is a correction factor and $\hat{b}_{U}=\left[\hat{b}_{\Omega}^{\top},\hat{b}_{V}^{\top}\right]^{\top}\in\mathbb{R}^{6}$
is the estimate of $b_{U}$. $k_{w}$, $k_{1}$, $k_{2}$, $\Gamma=\left[\begin{array}{cc}
\Gamma_{1} & 0_{3\times3}\\
0_{3\times3} & \Gamma_{2}
\end{array}\right]$, and $\alpha_{i}$ are positive constants. $M$ is defined in \eqref{eq:SLAM_M},
$\pi(\tilde{R},M)$ and $\boldsymbol{\Upsilon}(\tilde{R}M)$ are found
in \eqref{eq:SLAM_Gamma_VM} and \eqref{eq:SLAM_VEX_VM}, respectively,
while $e_{i}$ is defined in \eqref{eq:SLAM_e_Final} for all $i=1,2,\cdots,n$.
\begin{thm}
	Consider the SLAM dynamics $\dot{X}=(\dot{\boldsymbol{T}},\dot{\overline{{\rm p}}})$
	in \eqref{eq:SLAM_True_dot} with measurements obtained from features
	(output $\overline{y}_{i}=\boldsymbol{T}^{-1}\overline{{\rm p}}_{i}$)
	for all $i=1,2,\ldots,n$, inertial measurement units $\upsilon_{j}^{a}=R^{\top}\upsilon_{j}^{r}$
	for all $j=1,2,\ldots,n_{{\rm R}}$ and velocity measurements ($U_{m}=U+b_{U}$).
	Let Assumption \ref{Assumption:Feature} hold along with the discussion
	in Remark \ref{rem:R_Marix} ($n_{{\rm R}}\geq2$). Assume the filter
	design to be as in \eqref{eq:SLAM_T_est_dot_f2}, \eqref{eq:SLAM_TauR},
	\eqref{eq:SLAM_W_f2}, \eqref{eq:SLAM_b_est_dot_f2}, and \eqref{eq:SLAM_p_est_dot_f2}
	combined with the measurements $U_{m}$, $\upsilon_{j}^{a}$ and $\overline{y}_{i}$.
	Consider the design parameters $k_{w}$, $k_{1}$, $k_{2}$, $\Gamma$,
	and $\alpha_{i}$ to be positive constants for all $i=1,2,\ldots,n$,
	and $j=1,2,\ldots n_{{\rm R}}$. Define the following set:
	\begin{align}
	\mathcal{S}= & \left\{ (\tilde{R},e_{1},e_{2},\ldots,e_{n})\in\mathbb{SO}\left(3\right)\times\mathbb{R}^{3}\times\mathbb{R}^{3}\times\cdots\times\mathbb{R}^{3}\right|\nonumber \\
	& \hspace{7em}\left.\tilde{R}=\mathbf{I}_{3},e_{i}=\underline{\mathbf{0}}_{3}\forall i=1,2,\ldots n\right\} \label{eq:SLAM_Set2}
	\end{align}
	Then, 1) the error $(\tilde{R},e_{1},\ldots,e_{n})$ converges exponentially
	to $\mathcal{S}$ from almost any initial condition ($\tilde{R}\left(0\right)\notin\mathcal{U}_{s}$),
	2) $\tilde{b}_{U}$ converges asymptotically to the origin, and 3)
	the trajectory of $\tilde{P}$ remains bounded and there exists a
	constant vector $P_{c}\in\mathbb{R}^{3}$ with $\lim_{t\rightarrow\infty}\tilde{P}=P_{c}$.
\end{thm}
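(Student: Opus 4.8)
The plan is to follow the template of the preceding theorem, now additionally driving the attitude error through the IMU term.

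\textbf{Step 1 (error kinematics).} Differentiating $\tilde{\boldsymbol{T}}=\hat{\boldsymbol{T}}\boldsymbol{T}^{-1}$ exactly as in \eqref{eq:SLAM_T_error_dot} gives $\dot{\tilde{\boldsymbol{T}}}={\rm Ad}_{\hat{\boldsymbol{T}}}([\tilde{b}_{U}-W_{U}]_{\wedge})\tilde{\boldsymbol{T}}$. Reading off the upper-left block and invoking \eqref{eq:SLAM_Identity1} yields $\dot{\tilde{R}}=[\hat{R}(\tilde{b}_{\Omega}-W_{\Omega})]_{\times}\tilde{R}$, while the translational part gives $\dot{\tilde{P}}=[\hat{R}(\tilde{b}_{\Omega}-W_{\Omega})]_{\times}\tilde{P}+\nu$ with $\nu=[\hat{P}]_{\times}\hat{R}(\tilde{b}_{\Omega}-W_{\Omega})+\hat{R}(\tilde{b}_{V}-W_{V})$. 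For the features I reuse \eqref{eq:SLAM_e_dot_Final}, substitute $\dot{\hat{{\rm p}}}_{i}$ from \eqref{eq:SLAM_p_est_dot_f2}---the extra term $\hat{R}[y_{i}]_{\times}W_{\Omega}$ there being precisely what cancels the $W_{\Omega}$ feed-through from the pose channel---and expand $\overline{{\rm Ad}}_{\hat{\boldsymbol{T}}}$ to obtain $\dot{e}_{i}=-k_{1}e_{i}+\hat{R}[y_{i}]_{\times}\tilde{b}_{\Omega}-\hat{R}\tilde{b}_{V}+\hat{R}W_{V}$. Finally, using \eqref{eq:SLAM_Ecul_Dist}, \eqref{eq:SLAM_Identity6} and \eqref{eq:SLAM_VEX_VM} one gets $\frac{d}{dt}\Vert\tilde{R}M\Vert_{{\rm I}}=-\frac{1}{4}{\rm Tr}\{\dot{\tilde{R}}M\}=\frac{1}{2}\boldsymbol{\Upsilon}(\tilde{R}M)^{\top}\hat{R}(\tilde{b}_{\Omega}-W_{\Omega})$.

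\textbf{Step 2 (Lyapunov function and the bias cancellation).} I take $\mathcal{L}=n\Vert\tilde{R}M\Vert_{{\rm I}}+\sum_{i=1}^{n}\frac{1}{2\alpha_{i}}e_{i}^{\top}e_{i}+\frac{1}{2}\tilde{b}_{U}^{\top}\Gamma^{-1}\tilde{b}_{U}$, which is nonnegative and vanishes exactly on $\mathcal{S}\times\{\underline{\mathbf{0}}_{6}\}$. Differentiating along the closed loop, the crucial---and, I expect, most delicate---step is to verify that every $\tilde{b}_{U}$-dependent contribution produced by $\frac{d}{dt}\Vert\tilde{R}M\Vert_{{\rm I}}$ and by $\sum\frac{1}{\alpha_{i}}e_{i}^{\top}\dot{e}_{i}$ is annihilated by $-\tilde{b}_{U}^{\top}\Gamma^{-1}\dot{\hat{b}}_{U}$; matching the $\tilde{b}_{\Omega}$- and $\tilde{b}_{V}$-coefficients is exactly what pins down the $\frac{\alpha_{i}}{2}\hat{R}^{\top}$, $-[y_{i}]_{\times}\hat{R}^{\top}$ and $-\hat{R}^{\top}$ blocks in \eqref{eq:SLAM_b_est_dot_f2} as well as the coefficient multiplying $\Vert\tilde{R}M\Vert_{{\rm I}}$ in $\mathcal{L}$. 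After the cancellation, inserting $W_{U}$ from \eqref{eq:SLAM_W_f2} leaves $\dot{\mathcal{L}}=-c_{R}\frac{\Vert\boldsymbol{\Upsilon}(\tilde{R}M)\Vert^{2}}{\tau_{R}}-k_{1}\sum_{i=1}^{n}\frac{\Vert e_{i}\Vert^{2}}{\alpha_{i}}-k_{2}\Vert\sum_{i=1}^{n}e_{i}/\alpha_{i}\Vert^{2}\leq0$ for a positive constant $c_{R}$ (proportional to $n^{2}k_{w}$), negative definite in $(\boldsymbol{\Upsilon}(\tilde{R}M),e_{1},\dots,e_{n})$.

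\textbf{Step 3 (rates).} Lemma \ref{Lemm:SLAM_Lemma1} combined with \eqref{eq:SLAM_property2} gives $\Vert\boldsymbol{\Upsilon}(\tilde{R}M)\Vert^{2}\geq\frac{1}{2}\underline{\lambda}(\breve{\mathbf{M}})(1+\pi(\tilde{R},M))\Vert\tilde{R}M\Vert_{{\rm I}}=\frac{1}{2}\tau_{R}\Vert\tilde{R}M\Vert_{{\rm I}}$ by the definition \eqref{eq:SLAM_TauR} of $\tau_{R}$; this is precisely the step that cancels $\tau_{R}$ and produces $\dot{\mathcal{L}}\leq-\frac{c_{R}}{4}\Vert\tilde{R}M\Vert_{{\rm I}}-k_{1}\sum_{i}\frac{\Vert e_{i}\Vert^{2}}{\alpha_{i}}\leq-\kappa\mathcal{L}_{0}$, where $\mathcal{L}_{0}=n\Vert\tilde{R}M\Vert_{{\rm I}}+\sum_{i}\frac{1}{2\alpha_{i}}\Vert e_{i}\Vert^{2}$ and $\kappa>0$; since ${\rm rank}(M)=3$ makes $\Vert\tilde{R}M\Vert_{{\rm I}}$ comparable to $\Vert\tilde{R}\Vert_{{\rm I}}$, the bias entering $\mathcal{L}$ only additively, this governs the exponential decay of $(\Vert\tilde{R}\Vert_{{\rm I}},e_{1},\dots,e_{n})$, i.e. convergence of $(\tilde{R},e_{1},\dots,e_{n})$ to $\mathcal{S}$. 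The only trajectories that fail to converge to $\mathcal{S}$ are those with $\tilde{R}(0)$ in the forward-invariant, non-attractive (measure-zero) set $\mathcal{U}_{s}$ of Definition \ref{def:Unstable-set} (the remaining critical points of $\dot{\tilde{R}}$), which yields the ``from almost any initial condition'' qualifier and proves item~1.

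\textbf{Step 4 (bias and position).} Boundedness of all closed-loop signals makes $\dot{\mathcal{L}}$ uniformly continuous, so Barbalat's lemma gives $\dot{\mathcal{L}}\to0$, hence $e_{i},\dot{e}_{i}\to\underline{\mathbf{0}}_{3}$ and $\frac{d}{dt}\boldsymbol{\Upsilon}(\tilde{R}M)\to\underline{\mathbf{0}}_{3}$. Passing to the limit in $\frac{d}{dt}\boldsymbol{\Upsilon}(\tilde{R}M)=\mathbf{vex}(\boldsymbol{\mathcal{P}}_{a}([\hat{R}(\tilde{b}_{\Omega}-W_{\Omega})]_{\times}\tilde{R}M))$ with $\tilde{R}\to\mathbf{I}_{3}$, $W_{\Omega}\to\underline{\mathbf{0}}_{3}$, and using the identity $\mathbf{vex}(\boldsymbol{\mathcal{P}}_{a}([\cdot]_{\times}M))=\frac{1}{2}\breve{\mathbf{M}}(\cdot)$ with $\breve{\mathbf{M}}>0$, forces $\tilde{b}_{\Omega}\to\underline{\mathbf{0}}_{3}$; then the limit of $\dot{e}_{i}=-k_{1}e_{i}+\hat{R}[y_{i}]_{\times}\tilde{b}_{\Omega}-\hat{R}\tilde{b}_{V}+\hat{R}W_{V}$ gives $\tilde{b}_{V}\to\underline{\mathbf{0}}_{3}$, which is item~2. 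For item~3, skew-symmetry of $[\hat{R}(\tilde{b}_{\Omega}-W_{\Omega})]_{\times}$ in $\dot{\tilde{P}}=[\hat{R}(\tilde{b}_{\Omega}-W_{\Omega})]_{\times}\tilde{P}+\nu$ gives $\frac{d}{dt}\Vert\tilde{P}\Vert^{2}=2\tilde{P}^{\top}\nu$; since $W_{U}\to\underline{\mathbf{0}}_{6}$ (exponentially) and $\tilde{b}_{U}\to\underline{\mathbf{0}}_{6}$, the forcing $\nu$ is bounded and $\nu\to\underline{\mathbf{0}}_{3}$, so $\tilde{P}$ stays bounded, $\dot{\tilde{P}}\to\underline{\mathbf{0}}_{3}$, and $\tilde{P}\to P_{c}\in\mathbb{R}^{3}$, completing the proof. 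The hard part is the exact bias-term cancellation in Step~2 (which dictates the precise gains in \eqref{eq:SLAM_b_est_dot_f2} and \eqref{eq:SLAM_p_est_dot_f2}) together with the $\tau_{R}$-normalization via Lemma \ref{Lemm:SLAM_Lemma1} in Step~3 that upgrades convergence of $\boldsymbol{\Upsilon}(\tilde{R}M)$ to convergence of the attitude error itself; the position estimate in Step~4 is routine once the bias is shown to vanish.
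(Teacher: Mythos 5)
Your proposal is correct and follows essentially the same route as the paper: the same error kinematics, the same Lyapunov candidate built from $\sum_i\frac{1}{2\alpha_i}\|e_i\|^2$, $\|\tilde{R}M\|_{\rm I}$ and $\frac{1}{2}\tilde{b}_U^{\top}\Gamma^{-1}\tilde{b}_U$, the same use of Lemma \ref{Lemm:SLAM_Lemma1} to convert $-\frac{k_w}{2\tau_R}\|\boldsymbol{\Upsilon}(\tilde{R}M)\|^2$ into $-\frac{k_w}{4}\|\tilde{R}M\|_{\rm I}$, and the same Barbalat-type closing argument. Two of your deviations are worth noting: your factor $n$ on $\|\tilde{R}M\|_{\rm I}$ (and your $-k_2\|\sum_i e_i/\alpha_i\|^2$ rather than $-k_2\sum_i\|e_i/\alpha_i\|^2$) actually makes the cancellation consistent with the literal sums in \eqref{eq:SLAM_W_f2}--\eqref{eq:SLAM_b_est_dot_f2}, whereas the paper silently drops these $n$'s (compare \eqref{eq:SLAM_b_est_dot_f2} with \eqref{eq:SLAM_Filter_Pieces}); and for item 2 you extract $\tilde{b}_\Omega\to\underline{\mathbf{0}}_3$ from the attitude channel via positive definiteness of $\breve{\mathbf{M}}$ and then $\tilde{b}_V\to\underline{\mathbf{0}}_3$ from the feature channel, while the paper stacks the $n\ge3$ feature residual equations into a full-column-rank matrix $N\in\mathbb{R}^{3n\times6}$ and concludes $\tilde{b}_U\to\underline{\mathbf{0}}_6$ in one step; both identifiability arguments are valid.
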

\begin{proof} Since $\dot{\hat{\boldsymbol{T}}}$ in \eqref{eq:SLAM_T_est_dot_f2}
	is similar to \eqref{eq:SLAM_T_est_dot_f1}, the pose error dynamics
	become similar to \eqref{eq:SLAM_T_error_dot} such that $\dot{\tilde{\boldsymbol{T}}}={\rm Ad}_{\hat{\boldsymbol{T}}}\left(\left[\tilde{b}_{U}-W_{U}\right]_{\wedge}\right)\tilde{\boldsymbol{T}}$.
	Thus, the attitude error dynamics are
	\begin{align}
	\dot{\tilde{R}} & =\dot{\hat{R}}R^{\top}+\hat{R}\dot{R}^{\top}=\hat{R}\left[\tilde{b}_{\Omega}-\hat{R}^{\top}W_{\Omega}\right]_{\times}R^{\top}\nonumber \\
	& =\left[\hat{R}\tilde{b}_{\Omega}-W_{\Omega}\right]_{\times}\tilde{R}\label{eq:SLAM_Attit_Error_dot}
	\end{align}
	Recall the normalized Euclidean distance definition in \eqref{eq:SLAM_Ecul_Dist}
	such that $||\tilde{R}M||_{{\rm I}}=\frac{1}{4}{\rm Tr}\left\{ (\mathbf{I}_{3}-\tilde{R})M\right\} $.
	Thereby, in view of \eqref{eq:SLAM_Identity6}, one has
	\begin{align}
	\frac{d}{dt}||\tilde{R}M||_{{\rm I}} & =-\frac{1}{4}{\rm Tr}\left\{ \left[\hat{R}(\tilde{b}_{\Omega}-W_{\Omega})\right]_{\times}\tilde{R}M\right\} \nonumber \\
	& =-\frac{1}{4}{\rm Tr}\left\{ \tilde{R}M\boldsymbol{\mathcal{P}}_{a}\left(\left[\hat{R}(\tilde{b}_{\Omega}-W_{\Omega})\right]_{\times}\right)\right\} \nonumber \\
	& =\frac{1}{2}\mathbf{vex}\left(\boldsymbol{\mathcal{P}}_{a}(\tilde{R}M)\right)^{\top}\hat{R}(\tilde{b}_{\Omega}-W_{\Omega})\label{eq:SLAM_RI_VM_dot}
	\end{align}
	Note that $\dot{M}=0_{3\times3}$ by its definition in \eqref{eq:SLAM_M}.
	Recalling the expression in \eqref{eq:SLAM_expression1}, one finds
	\begin{align}
	& {\rm Ad}_{\hat{\boldsymbol{T}}}\left(\left[\tilde{b}_{U}-W_{U}\right]_{\wedge}\right)\tilde{\boldsymbol{T}}\,\overline{{\rm p}}_{i}\nonumber \\
	& \hspace{3em}=\left[\begin{array}{cc}
	\left[\hat{R}y_{i}+\hat{P}\right]_{\times} & \underline{\mathbf{0}}_{3}\\
	\mathbf{I}_{3} & \underline{\mathbf{0}}_{3}
	\end{array}\right]^{\top}\overline{{\rm Ad}}_{\hat{\boldsymbol{T}}}\left(\tilde{b}_{U}-W_{U}\right)\nonumber \\
	& \hspace{3em}=\left[\begin{array}{cc}
	-\hat{R}\left[y_{i}\right]_{\times} & \hat{R}\\
	\underline{\mathbf{0}}_{3}^{\top} & \underline{\mathbf{0}}_{3}^{\top}
	\end{array}\right]\left(\tilde{b}_{U}-W_{U}\right)\label{eq:SLAM_expression2}
	\end{align}
	Thus, analogously to \eqref{eq:SLAM_e_dot_Final} and in view of \eqref{eq:SLAM_expression2},
	the error dynamics of $\overset{\circ}{e}_{i}$ can be expressed as
	\begin{align*}
	\overset{\circ}{\dot{e}}_{i} & =\overset{\circ}{\dot{\hat{{\rm p}}}}_{i}-\left[\begin{array}{cc}
	-\hat{R}\left[y_{i}\right]_{\times} & \hat{R}\\
	\underline{\mathbf{0}}_{3}^{\top} & \underline{\mathbf{0}}_{3}^{\top}
	\end{array}\right]\left(\tilde{b}_{U}-W_{U}\right)
	\end{align*}
	which means
	\begin{align}
	\dot{e}_{i} & =\dot{\hat{{\rm p}}}_{i}-\left[\begin{array}{cc}
	-\hat{R}\left[y_{i}\right]_{\times} & \hat{R}\end{array}\right]\left(\tilde{b}_{U}-W_{U}\right)\label{eq:SLAM_e_dot_Final2}
	\end{align}
	Define the following candidate Lyapunov function $\mathcal{L}=\mathcal{L}(e_{1},e_{2},\ldots,e_{n},||\tilde{R}M||_{{\rm I}},\tilde{b}_{U})$
	\begin{equation}
	\mathcal{L}=\sum_{i=1}^{n}\frac{1}{2\alpha_{i}}e_{i}^{\top}e_{i}+||\tilde{R}M||_{{\rm I}}+\frac{1}{2}\tilde{b}_{U}^{\top}\Gamma^{-1}\tilde{b}_{U}\label{eq:SLAM_Lyap2}
	\end{equation}
	From \eqref{eq:SLAM_RI_VM_dot} and \eqref{eq:SLAM_e_dot_Final2},
	the time derivative of \eqref{eq:SLAM_Lyap2} becomes
	\begin{align}
	\dot{\mathcal{L}}= & \sum_{i=1}^{n}\frac{1}{\alpha_{i}}e_{i}^{\top}\dot{e}_{i}+||\dot{\tilde{R}}M||_{{\rm I}}-\tilde{b}_{U}^{\top}\Gamma^{-1}\dot{\hat{b}}_{U}\nonumber \\
	= & \sum_{i=1}^{n}\frac{1}{\alpha_{i}}e_{i}^{\top}\dot{\hat{{\rm p}}}_{i}-\sum_{i=1}^{n}\frac{1}{\alpha_{i}}e_{i}^{\top}\left[\begin{array}{cc}
	-\hat{R}\left[y_{i}\right]_{\times} & \hat{R}\end{array}\right](\tilde{b}_{U}-W_{U})\nonumber \\
	& +\frac{1}{2}\boldsymbol{\Upsilon}(\tilde{R}M)^{\top}\hat{R}(\tilde{b}_{\Omega}-W_{\Omega})-\tilde{b}_{U}^{\top}\Gamma^{-1}\dot{\hat{b}}_{U}\label{eq:SLAM_Lyap2_dot2}
	\end{align}
	which means
	\begin{align}
	\dot{\mathcal{L}}= & \sum_{i=1}^{n}\frac{1}{\alpha_{i}}\left[\begin{array}{c}
	\boldsymbol{\Upsilon}(\tilde{R}M)\\
	e_{i}
	\end{array}\right]^{\top}\left[\begin{array}{cc}
	\frac{\alpha_{i}}{2}\hat{R} & 0_{3\times3}\\
	\hat{R}\left[y_{i}\right]_{\times} & -\hat{R}
	\end{array}\right](\tilde{b}_{U}-W_{U})\nonumber \\
	& +\sum_{i=1}^{n}\frac{1}{\alpha_{i}}e_{i}^{\top}\dot{\hat{{\rm p}}}_{i}-\tilde{b}_{U}^{\top}\Gamma^{-1}\dot{\hat{b}}_{U}\label{eq:SLAM_Lyap2_dot3}
	\end{align}
	With direct substitution of $W_{U}$, $\dot{\hat{b}}_{U}$, and $\dot{\hat{{\rm p}}}_{i}$
	with their definitions in \eqref{eq:SLAM_W_f2}, \eqref{eq:SLAM_b_est_dot_f2},
	and \eqref{eq:SLAM_p_est_dot_f2}, respectively, one obtains
	\begin{align*}
	\dot{\mathcal{L}}= & -\sum_{i=1}^{n}\frac{k_{1}}{\alpha_{i}}||e_{i}||^{2}-\frac{k_{w}}{2\tau_{R}}\left\Vert \boldsymbol{\Upsilon}(\tilde{R}M)\right\Vert ^{2}-k_{2}\sum_{i=1}^{n}||e_{i}/\alpha_{i}||^{2}
	\end{align*}
	As a result of \eqref{eq:SLAM_lemm1_2} in Lemma \ref{Lemm:SLAM_Lemma1},
	one obtains
	\begin{equation}
	\dot{\mathcal{L}}\leq-\sum_{i=1}^{n}\frac{k_{1}}{\alpha_{i}}||e_{i}||^{2}-\frac{k_{w}}{4}||\tilde{R}M||_{{\rm I}}-k_{2}\sum_{i=1}^{n}||e_{i}/\alpha_{i}||^{2}\label{eq:SLAM_Lyap2_final}
	\end{equation}
	According to the result in \eqref{eq:SLAM_Lyap2_final}, the derivative
	of $\dot{\mathcal{L}}$ is negative definite, while $\dot{\mathcal{L}}$
	equals to zero at $e_{i}=\underline{\mathbf{0}}_{3}$ as well as $||\tilde{R}M||_{{\rm I}}=0$.
	By the definition of the normalized Euclidean distance $||\tilde{R}M||_{{\rm I}}=\frac{1}{4}{\rm Tr}\left\{ (\mathbf{I}_{3}-\tilde{R})M\right\} $,
	$||\tilde{R}M||_{{\rm I}}=0$ if and only if $\tilde{R}=\mathbf{I}_{3}$.
	Thus, the result in \eqref{eq:SLAM_Lyap2_final} ensures that $e_{i}$
	as well as $\tilde{R}$ converge exponentially to the set $\mathcal{S}$
	defined in \eqref{eq:SLAM_Set2} for all $i=1,2,\ldots,n$ and $\tilde{R}\left(0\right)\notin\mathcal{U}_{s}$.
	Based on \eqref{eq:SLAM_lemm1_2} in Lemma \ref{Lemm:SLAM_Lemma1}
	along with the definitions in \eqref{eq:SLAM_RI_VM} and \eqref{eq:SLAM_VEX_VM},
	$||\tilde{R}M||_{{\rm I}}\rightarrow0$ implies that $\boldsymbol{\Upsilon}(\tilde{R}M)\rightarrow0$.
	$\dot{\mathcal{L}}$ is negative, continuous and converges to zero
	signifying that $\mathcal{L}\in\mathcal{L}_{\infty}$ and that a finite
	$\lim_{t\rightarrow\infty}\mathcal{L}$ exists. In view of $\tilde{b}_{U}$
	definition in \eqref{eq:SLAM_b_error} and $\dot{\hat{b}}_{U}$ in
	\eqref{eq:SLAM_b_est_dot_f2}, $\dot{\tilde{b}}_{U}=-\dot{\hat{b}}_{U}$
	implies that $\dot{\tilde{b}}_{U}\rightarrow0$ as $e_{i}\rightarrow0$
	and $\boldsymbol{\Upsilon}(\tilde{R}M)\rightarrow0$. Thereby, $\tilde{b}_{U}$
	is bounded for all $t\geq0$. In view of \eqref{eq:SLAM_W_f2}, $W_{U}\rightarrow0$
	as $e_{i}\rightarrow0$ and $\boldsymbol{\Upsilon}(\tilde{R}M)\rightarrow0$.
	Moreover, from \eqref{eq:SLAM_p_est_dot_f2}, $\dot{{\rm \hat{p}}}_{i}\rightarrow0$
	as $e_{i}\rightarrow0$ and $W_{U}\rightarrow0$. Since $\lim_{t\rightarrow\infty}\dot{e}_{i}=0$
	and considering the above discussion, one has
	\[
	\lim_{t\rightarrow\infty}\dot{e}_{i}=\lim_{t\rightarrow\infty}-\left[\begin{array}{c}
	\left[\hat{R}y_{i}+\hat{P}\right]_{\times}\\
	\mathbf{I}_{3}
	\end{array}\right]^{\top}\overline{{\rm Ad}}_{\hat{\boldsymbol{T}}}\tilde{b}_{U}=0
	\]
	Define
	\[
	N=-\left[\begin{array}{cc}
	-\left[\hat{R}y_{1}+\hat{P}\right]_{\times} & \mathbf{I}_{3}\\
	\vdots & \vdots\\
	-\left[\hat{R}y_{n}+\hat{P}\right]_{\times} & \mathbf{I}_{3}
	\end{array}\right]\overline{{\rm Ad}}_{\hat{\boldsymbol{T}}}\in\mathbb{R}^{3n\times6},\hspace{1em}n\geq3
	\]
	Consistently with Assumption \ref{Assumption:Feature}, number of
	features is $n\geq3$. Accordingly, $N$ is full column rank. It becomes
	apparent that $\lim_{t\rightarrow\infty}N\tilde{b}_{U}=0$ implies
	that $\lim_{t\rightarrow\infty}\tilde{b}_{U}=0$. Hence, from \eqref{eq:SLAM_Lyap2_final},
	$\ddot{\mathcal{L}}$ is bounded. Based on Barbalat Lemma, $\dot{\mathcal{L}}$
	is uniformly continuous. Due to the fact that $\tilde{b}_{U}\rightarrow0$
	and $W_{U}\rightarrow0$ as $t\rightarrow\infty$, $\dot{\tilde{\boldsymbol{T}}}\rightarrow0$
	which leads to $\tilde{\boldsymbol{T}}\rightarrow\boldsymbol{T}_{c}(\mathbf{I}_{3},P_{c})$
	with $\boldsymbol{T}_{c}(\mathbf{I}_{3},P_{c})\in\mathbb{SE}\left(3\right)$
	denoting a constant matrix where $P_{c}\in\mathbb{R}^{3}$ is a constant
	vector. Therefore, it can be concluded that $\lim_{t\rightarrow\infty}\tilde{P}=P_{c}$
	completing the proof.\end{proof}
\begin{rem}
	Selecting $\tau_{R}=1$ in \eqref{eq:SLAM_TauR} will lead to
	\[
	\dot{\mathcal{L}}=-\sum_{i=1}^{n}\frac{k_{1}}{\alpha_{i}}||e_{i}||^{2}-\frac{k_{w}}{2}\left\Vert \boldsymbol{\Upsilon}(\tilde{R}M)\right\Vert ^{2}-k_{2}\sum_{i=1}^{n}||e_{i}/\alpha_{i}||^{2}
	\]
	such that $e_{i}\rightarrow0$ and $\tilde{R}\rightarrow\mathbf{I}_{3}$
	asymptotically for all $i=1,2,\ldots,n$ and $\tilde{R}\left(0\right)\notin\mathcal{U}_{s}$.
\end{rem}
\begin{algorithm}
	\caption{\label{alg:Alg_Disc}Discrete nonlinear filter for SLAM described
		in Subsection \ref{subsec:Det_with_IMU}}
	
	\textbf{Initialization}:
	\begin{enumerate}
		\item[{\footnotesize{}1:}] Set $\hat{R}[0]\in\mathbb{SO}\left(3\right)$ and $\hat{P}[0]\in\mathbb{R}^{3}$.
		Alternatively, construct $\hat{R}[0]\in\mathbb{SO}\left(3\right)$
		using one of the methods of attitude determination, visit \cite{hashim2020AtiitudeSurvey}\vspace{1mm}
		\item[{\footnotesize{}2:}] Set ${\rm \hat{p}}_{i}[0]\in\mathbb{R}^{3}$ for all $i=1,2,\ldots,n$\vspace{1mm}
		\item[{\footnotesize{}3:}] Set $\hat{b}_{U}[0]=0_{6\times1}$ \vspace{1mm}
		\item[{\footnotesize{}4:}] Select $k_{w}$, $k_{1}$, $k_{2}$, $\Gamma$, and $\alpha_{i}$
		as positive constants, and the sample $k=0$
	\end{enumerate}
	\textbf{while }(1)\textbf{ do}
	\begin{enumerate}
		\item[] \textcolor{blue}{/{*} Measurement collection \& Filter setup {*}/}
		\item[{\footnotesize{}5:}] \textbf{for} $j=1:n_{{\rm R}}$\vspace{1mm}
		\item[{\footnotesize{}6:}] \hspace{0.5cm}Measurements and observations as in \eqref{eq:SLAM_Vect_R}\vspace{1mm}
		\item[{\footnotesize{}7:}] \hspace{0.5cm}$\upsilon_{j}^{r}=\frac{r_{j}}{\left\Vert r_{j}\right\Vert },\upsilon_{j}^{a}=\upsilon_{j}^{a}[k]=\frac{a_{j}[k]}{\left\Vert a_{j}[k]\right\Vert }$
		as in \eqref{eq:SLAM_Vector_norm}\vspace{1mm}
		\item[{\footnotesize{}8:}] \hspace{0.5cm}$\hat{\upsilon}_{j}^{a}=\hat{\upsilon}_{j}^{a}[k]=\hat{R}[k]^{\top}\upsilon_{j}^{r}$
		as in \eqref{eq:SLAM_vect_R_estimate}\vspace{1mm}
		\item[{\footnotesize{}9:}] \textbf{end for}\vspace{1mm}
		\item[{\footnotesize{}10:}] $M=\sum_{j=1}^{n_{{\rm R}}}s_{j}\upsilon_{j}^{r}\left(\upsilon_{j}^{r}\right)^{\top}$
		as in \eqref{eq:SLAM_M} with\\
		$\breve{\mathbf{M}}={\rm Tr}\left\{ M\right\} \mathbf{I}_{3}-M$\vspace{1mm}
		\item[{\footnotesize{}11:}] $\boldsymbol{\Upsilon}=\boldsymbol{\Upsilon}[k]=\hat{R}\sum_{j=1}^{n_{{\rm R}}}(\frac{s_{j}}{2}\hat{\upsilon}_{j}^{a}\times\upsilon_{j}^{a})$
		as in \eqref{eq:SLAM_VEX_VM}\vspace{1mm}
		\item[{\footnotesize{}12:}] $\pi={\rm Tr}\left\{ \left(\sum_{j=1}^{n_{{\rm R}}}s_{j}\upsilon_{j}^{a}\left(\upsilon_{j}^{r}\right)^{\top}\right)\left(\sum_{j=1}^{n_{{\rm R}}}s_{j}\hat{\upsilon}_{j}^{a}\left(\upsilon_{j}^{r}\right)^{\top}\right)^{-1}\right\} $
		as in \eqref{eq:SLAM_Gamma_VM} where $\pi=\pi[k]$\vspace{1mm}
		\item[{\footnotesize{}13:}] \textbf{for} $i=1:n$\vspace{1mm}
		\item[{\footnotesize{}14:}] \hspace{0.5cm}$e_{i}[k]=\hat{{\rm p}}_{i}[k]-\hat{R}[k]y_{i}[k]-\hat{P}[k]$
		as in \eqref{eq:SLAM_e_Final}\vspace{1mm}
		\item[{\footnotesize{}15:}] \textbf{end for}\vspace{1mm}
		\item[] \textcolor{blue}{/{*} Filter design \& update step {*}/}
		\item[{\footnotesize{}16:}] $W_{U}[k]=\left[\begin{array}{c}
		W_{\Omega}[k]\\
		W_{V}[k]
		\end{array}\right]=\sum_{i=1}^{n}\frac{1}{\alpha_{i}}\left[\begin{array}{c}
		\frac{k_{w}\alpha_{i}}{\tau_{R}}\hat{R}[k]^{\top}\boldsymbol{\Upsilon}\\
		-k_{2}\hat{R}[k]^{\top}e_{i}[k]
		\end{array}\right]$,\\
		with $\tau_{R}=\underline{\lambda}(\breve{\mathbf{M}})\times(1+\pi)$\vspace{1mm}
		\item[{\footnotesize{}17:}] $\hat{\boldsymbol{T}}[k+1]=\hat{\boldsymbol{T}}[k]\exp([U_{m}[k]-\hat{b}_{U}[k]-W_{U}[k]]_{\wedge}\Delta t)$\vspace{1mm}
		\item[{\footnotesize{}18:}] $\hat{b}_{U}[k+1]=\hat{b}_{U}[k]$\\
		$-\sum_{i=1}^{n}\frac{\Gamma\Delta t}{\alpha_{i}}\left[\begin{array}{cc}
		\frac{\alpha_{i}}{2}\hat{R}[k]^{\top} & -\left[y_{i}[k]\right]_{\times}\hat{R}[k]^{\top}\\
		0_{3\times3} & -\hat{R}[k]^{\top}
		\end{array}\right]\left[\begin{array}{c}
		\boldsymbol{\Upsilon}\\
		e_{i}[k]
		\end{array}\right]$\vspace{1mm}
		\item[{\footnotesize{}19:}] \textbf{for} $i=1:n$\vspace{1mm}
		\item[{\footnotesize{}20:}] \hspace{0.5cm}${\rm \hat{p}}_{i}[k+1]={\rm \hat{p}}_{i}[k]-\Delta t(k_{1}e_{i}[k]+\hat{R}[k]\left[y_{i}[k]\right]_{\times}W_{\Omega}[k])$\vspace{1mm}
		\item[{\footnotesize{}21:}] \textbf{end for}\vspace{1mm}
		\item[{\footnotesize{}22:}] $k=k+1$
	\end{enumerate}
	\textbf{end while}
\end{algorithm}

The continuous form of the filter proposed in \eqref{eq:SLAM_T_est_dot_f2}-\eqref{eq:SLAM_p_est_dot_f2}
can be simplified and summarized in terms of vector measurements as
follows:
\begin{equation}
\begin{cases}
\dot{\hat{R}} & =\hat{R}\left[\Omega_{m}-\hat{b}_{\Omega}-W_{\Omega}\right]_{\times}\\
\dot{\hat{P}} & =\hat{R}(V_{m}-\hat{b}_{V}-W_{V})\\
\tau_{R} & =\underline{\lambda}(\breve{\mathbf{M}})\times(1+\pi(\tilde{R},M))\\
W_{\Omega} & =\frac{k_{w}}{\tau_{R}}\hat{R}^{\top}\boldsymbol{\Upsilon}(\tilde{R}M)\\
W_{V} & =-\sum_{i=1}^{n}\frac{k_{2}}{\alpha_{i}}\hat{R}^{\top}e_{i}\\
\dot{\hat{b}}_{\Omega} & =\frac{\Gamma_{1}}{2}\hat{R}^{\top}\boldsymbol{\Upsilon}(\tilde{R}M)-\sum_{i=1}^{n}\frac{\Gamma_{1}}{\alpha_{i}}[y_{i}]_{\times}\hat{R}^{\top}e_{i}\\
\dot{\hat{b}}_{V} & =-\sum_{i=1}^{n}\frac{\Gamma_{2}}{\alpha_{i}}\hat{R}^{\top}e_{i}\\
\dot{{\rm \hat{p}}}_{i} & =-k_{1}e_{i}+\hat{R}[y_{i}]_{\times}W_{\Omega},\hspace{1em}i=1,2,\ldots,n
\end{cases}\label{eq:SLAM_Filter_Pieces}
\end{equation}
Let $\Delta t$ denote a small sample time. The detailed implementation
steps of the discrete form of the filter proposed in \eqref{eq:SLAM_T_est_dot_f2}-\eqref{eq:SLAM_p_est_dot_f2}
can be found in Algorithm \ref{alg:Alg_Disc}. It should be remarked
that $\exp$ in Algorithm \ref{alg:Alg_Disc} denotes exponential
of a matrix which is defined in MATLAB as ``expm''.

\section{Simulation Results \label{sec:SE3_Simulations}}

In this section, the effectiveness of the proposed nonlinear filter
for SLAM on the Lie group $\mathbb{SLAM}_{n}\left(3\right)$ is put
to the test. Let the angular velocity be $\Omega=[0,0,0.3]^{\top}({\rm rad/sec})$
and the translational velocity be $V=[2.5,0,0.2t]^{\top}({\rm m/sec})$.
Consider the following initial values of the true attitude and position
of the vehicle
\[
R\left(0\right)=\mathbf{I}_{3},\hspace{1em}P\left(0\right)=[0,0,6]^{\top}
\]
Let us place the four features fixed in space relative to the inertial
frame at ${\rm p}_{1}=[10,10,0]^{\top}$, ${\rm p}_{2}=[-10,10,0]^{\top}$,
${\rm p}_{3}=[10,-10,0]^{\top}$, and ${\rm p}_{4}=[-10,-10,0]^{\top}$.
Suppose that unknown bias is corrupting the group velocity vector
$b_{U}=\left[b_{\Omega}^{\top},b_{V}^{\top}\right]^{\top}$ with $b_{\Omega}=[0.2,-0.2,0.2]^{\top}({\rm rad/sec})$
and $b_{V}=[0.04,0.1,-0.02]^{\top}({\rm m/sec})$. In addition, let
us assume that the group velocity vector is corrupted with noise defined
as $n_{U}=\left[n_{\Omega}^{\top},n_{V}^{\top}\right]^{\top}$ where
$n_{\Omega}=\mathcal{N}\left(0,0.2\right)({\rm rad/sec})$ and $n_{V}=\mathcal{N}\left(0,0.2\right)({\rm m/sec})$.
Note that $n_{\Omega}=\mathcal{N}\left(0,0.2\right)$ is a short-hand
notation for a normally distributed random noise vector with zero
mean and a standard deviation of $0.2$. Let two non-collinear inertial-frame
observations be given as $r_{1}=\left[1,-1,1\right]^{\top}$ and $r_{2}=\left[0,0,1\right]^{\top}$,
and define the body-frame measurements as in \eqref{eq:SLAM_Vect_R}.
In accordance with Remark \ref{rem:R_Marix}, let us obtain the third
observation and the associated measurements by means of a cross product
of the two available observations. Let the initial estimates of attitude
and position be set to
\begin{align*}
\hat{R}\left(0\right) & =\left[\begin{array}{ccc}
0.8112 & -0.5660 & 0.1468\\
0.5749 & 0.8179 & -0.0234\\
-0.1068 & 0.1034 & 0.9889
\end{array}\right]\\
\hat{P}\left(0\right) & =[0,0,0]^{\top}
\end{align*}
and suppose that the initial feature position estimates are set to
$\hat{{\rm p}}_{1}\left(0\right)=\hat{{\rm p}}_{2}\left(0\right)=\hat{{\rm p}}_{3}\left(0\right)=\hat{{\rm p}}_{4}\left(0\right)=[0,0,0]^{\top}$.
Consider the design parameters to be $\alpha_{i}=0.1$, $\Gamma_{1}=3\mathbf{I}_{3}$,
$\Gamma_{2}=100\mathbf{I}_{3}$, $k_{w}=5$, $k_{1}=5$, and $k_{2}=20$,
with the initial bias estimate being $\hat{b}_{U}\left(0\right)=\underline{\mathbf{0}}_{6}$
for all $i=1,2,3,4$.

The illustration of the true angular and translational velocities
plotted against their measurements can be seen in Fig. \ref{fig:SLAM_Vel}
(two of the three components). Fig. \ref{fig:SLAM_3d} demonstrates
evolution of the trajectories estimated by the nonlinear filter for
SLAM presented in Subsection IV-B in its continuous form. Although
the trajectory of the vehicle was initialized with a large error,
Fig. \ref{fig:SLAM_3d} shows how it was smoothly regulated to the
true trajectory ultimately reaching the desired destination. Likewise,
feature estimates initialized at the origin gradually diverged to
their true respective positions. 

\begin{figure}[h!]
	\centering{}\includegraphics[scale=0.31]{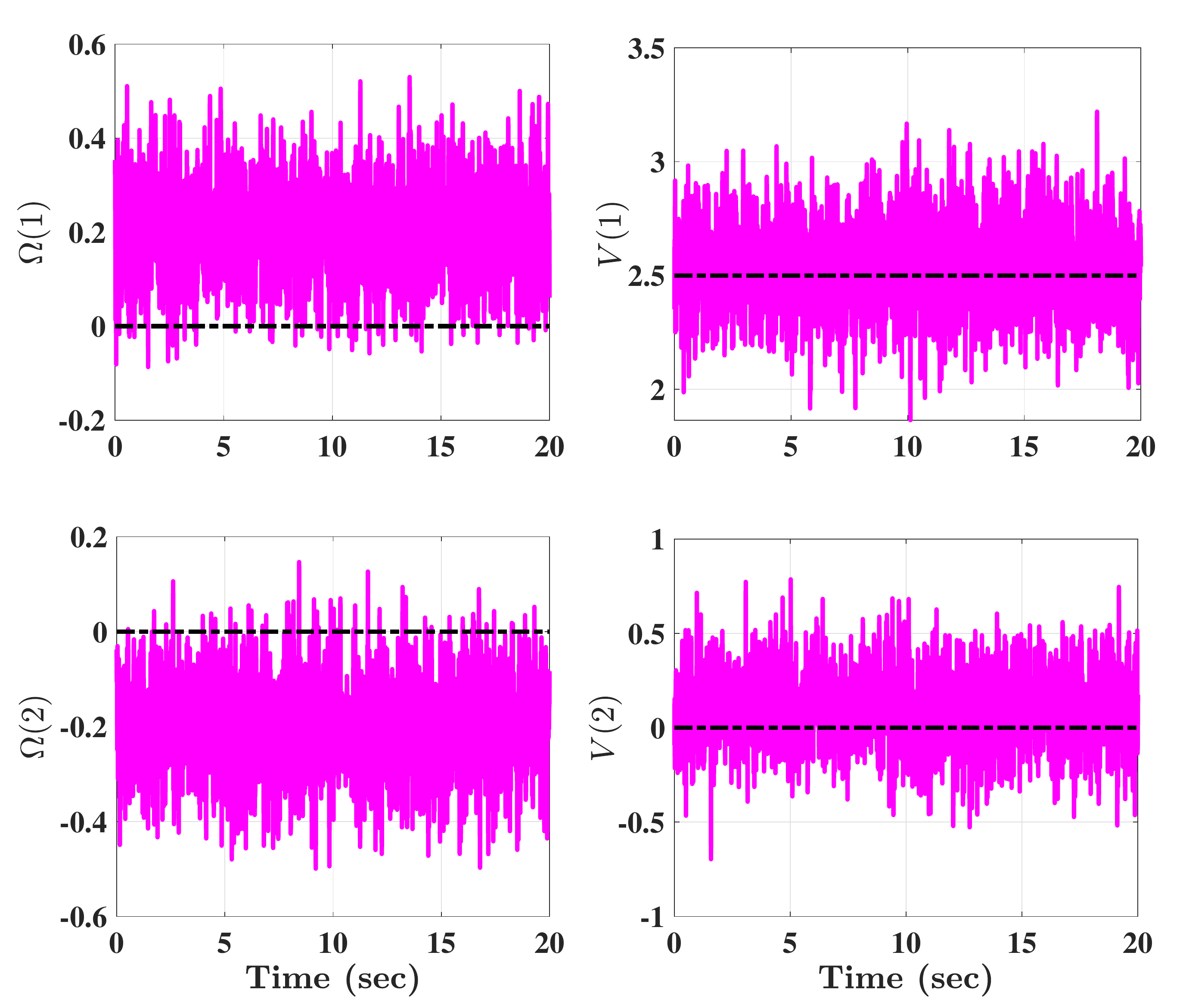}\caption{True angular and translational velocities plotted in black center-line
		vs measurements of angular and translational velocities plotted in
		magenta solid-line.}
	\label{fig:SLAM_Vel}
\end{figure}

\begin{figure}[h!]
	\centering{}\includegraphics[scale=0.31]{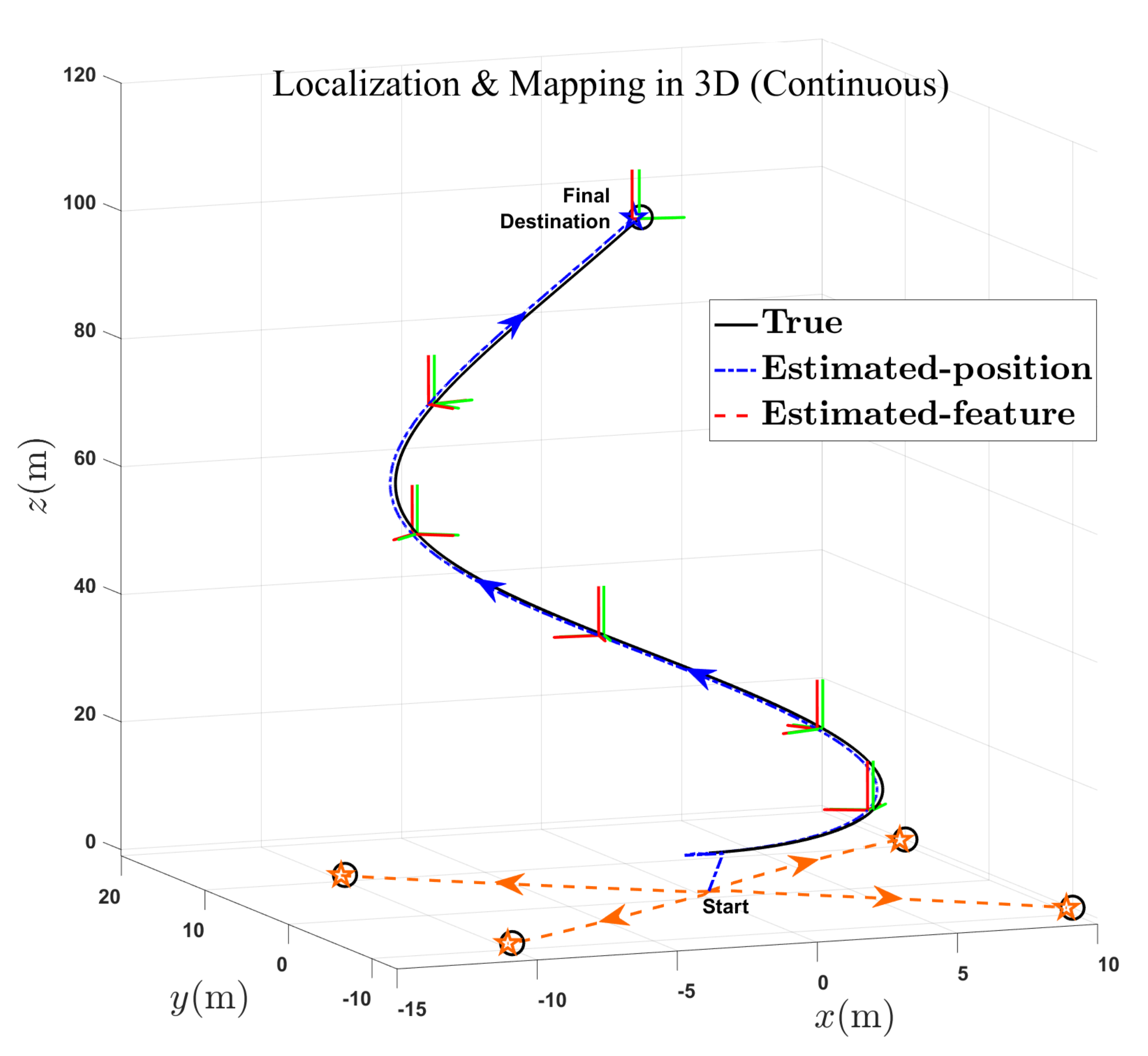}\caption{Output trajectories of the proposed nonlinear continuous filter for
		SLAM described in Subsection \ref{subsec:Det_with_IMU} using velocity,
		feature, and IMU measurements are plotted against the true vehicle
		and feature positions in 3D space (continuous time). The true vehicle
		trajectory is depicted in black solid-line with its final destination
		marked as a black circle. The true orientation of the vehicle is depicted
		as a green solid-line. Additionally, the true fixed features are marked
		as black circle at ${\rm p}_{1}$, ${\rm p}_{2}$, ${\rm p}_{3}$,
		and ${\rm p}_{4}$. The estimation of the travel trajectory is shown
		as a blue center-line starting from (0,0,0) and ending at its final
		destination shown as a blue star \textcolor{blue}{$\star$}. The vehicle
		orientation estimation is shown as a red solid-line. The estimation
		process of the feature positions is shown in orange dash-lines which
		originate at (0,0,0) and end at their final destinations marked with
		orange stars \textcolor{orange}{$\star$}.}
	\label{fig:SLAM_3d}
\end{figure}

\begin{figure}[h!]
	\centering{}\includegraphics[scale=0.3]{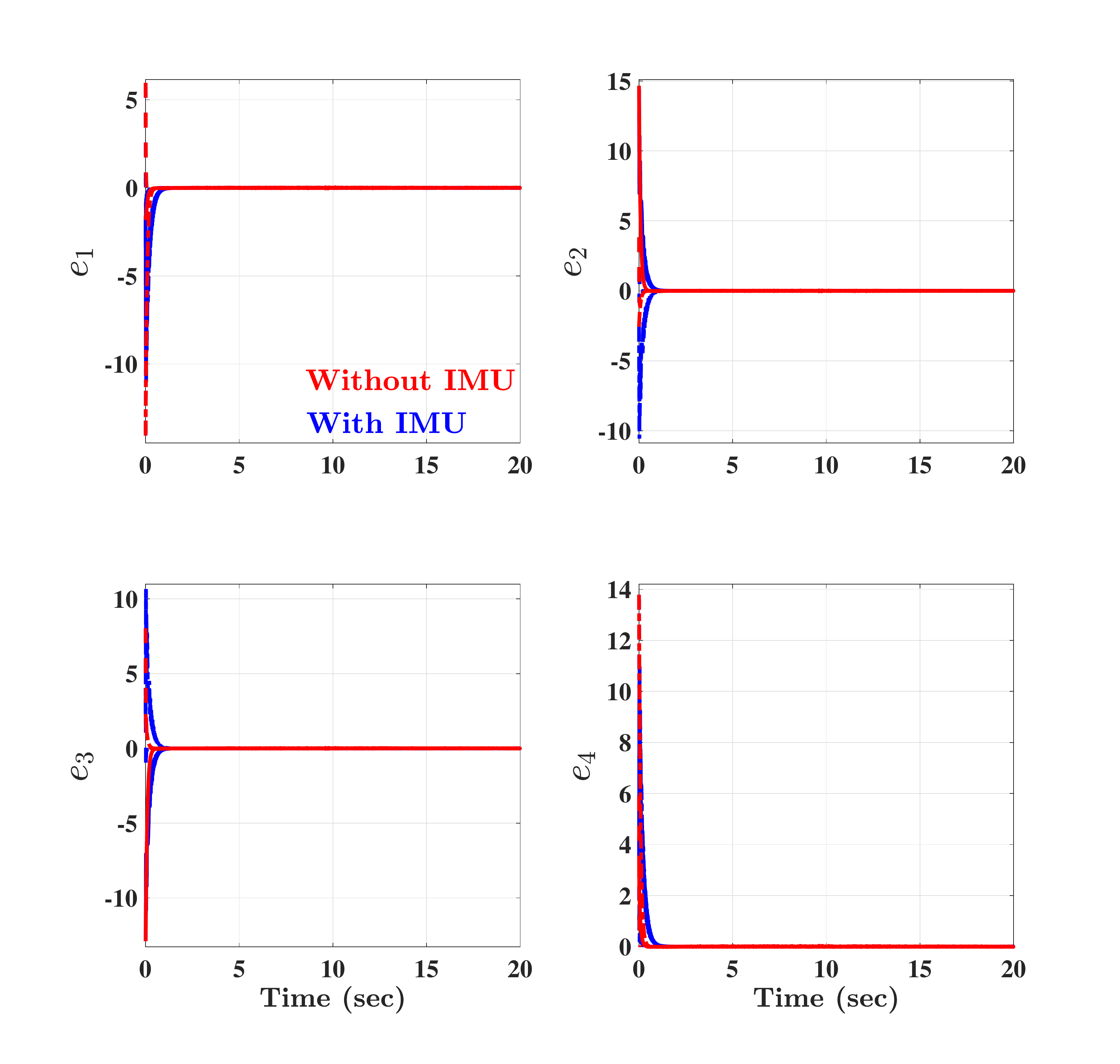}\caption{Convergence of the error trajectories of $e_{i}=[e_{i1},e_{i2},e_{i3}]^{\top}$for
		$i=1,2,3,4$ used in the Lyapunov function candidate. The proposed
		nonlinear filter for SLAM with IMU described in Subsection \ref{subsec:Det_with_IMU}
		is plotted in blue against the nonlinear filter for SLAM described
		in Subsection \ref{subsec:Det_without_IMU} plotted in red.}
	\label{fig:SLAM_error1}
\end{figure}

Fig. \ref{fig:SLAM_error1} summaries the asymptotic convergence behavior
of $e_{i}$ when using the nonlinear SLAM filter with and without
IMU for all $i=1,2,3,4$. Recall that $\tilde{R}=\hat{R}R^{\top}$,
$\tilde{P}=\hat{P}-\tilde{R}P$, $\tilde{{\rm p}}_{i}=\hat{{\rm p}}_{i}-\tilde{R}{\rm p}_{i}$,
and $||\tilde{R}||_{{\rm I}}=\frac{1}{4}{\rm Tr}\{\mathbf{I}_{3}-\tilde{R}\}$.
Considering the nonlinear SLAM filter without IMU described in Subsection
\ref{subsec:Det_without_IMU} one will notice that since $e_{i}=\tilde{{\rm p}}_{i}-\tilde{P}$,
asymptotic convergence of $e_{i}$ does not imply that $||\tilde{R}||_{{\rm I}}\rightarrow0$,
$\tilde{P}\rightarrow0$, and $\tilde{{\rm p}}_{i}\rightarrow0$.
Therefore, it follows that $\tilde{R}$, $\tilde{P}$, and $\tilde{{\rm p}}_{i}$
converge to a constant. However, the real objective of the SLAM filter
design is to achieve $||\tilde{R}||_{{\rm I}}\rightarrow0$, $||P-\hat{P}||\rightarrow0$,
and $||{\rm p}_{i}-{\rm \hat{p}}_{i}||\rightarrow0$. Fig. \ref{fig:SLAM_error2}
compares and contrasts the performance of the proposed nonlinear SLAM
filter with IMU and the nonlinear filter without IMU, emphasizing
the robustness and effectiveness in presence of IMU. As illustrated
in Fig. \ref{fig:SLAM_error2}, the nonlinear filter for SLAM without
IMU produced poor tracking performance of the error components: $||\tilde{R}||_{{\rm I}}$,
$||P-\hat{P}||$, and $||{\rm p}_{i}-{\rm \hat{p}}_{i}||$ in consistence
with \cite{zlotnik2018SLAM}. In contrast, the proposed nonlinear
filter with IMU, also depicted in Fig. \ref{fig:SLAM_error2}, demonstrates
asymptotic convergence of the attitude error ($||\tilde{R}||_{{\rm I}}$)
as well as reasonable convergence of the position error ($||P-\hat{P}||$)
and the $i$th feature error ($||{\rm p}_{i}-{\rm \hat{p}}_{i}||$)
to the close neighborhood of the origin. Despite the presence of the
residual error in $||P-\hat{P}||$ and $||{\rm p}_{i}-{\rm \hat{p}}_{i}||$,
remarkable difference is observed in the convergence of $||\tilde{R}||_{{\rm I}}$,
$||P-\hat{P}||$, and $||{\rm p}_{i}-{\rm \hat{p}}_{i}||$ between
the filter described in Subsection \ref{subsec:Det_without_IMU} and
the novel filter proposed in Subsection \ref{subsec:Det_with_IMU}.

\begin{figure}[h!]
	\centering{}\includegraphics[scale=0.29]{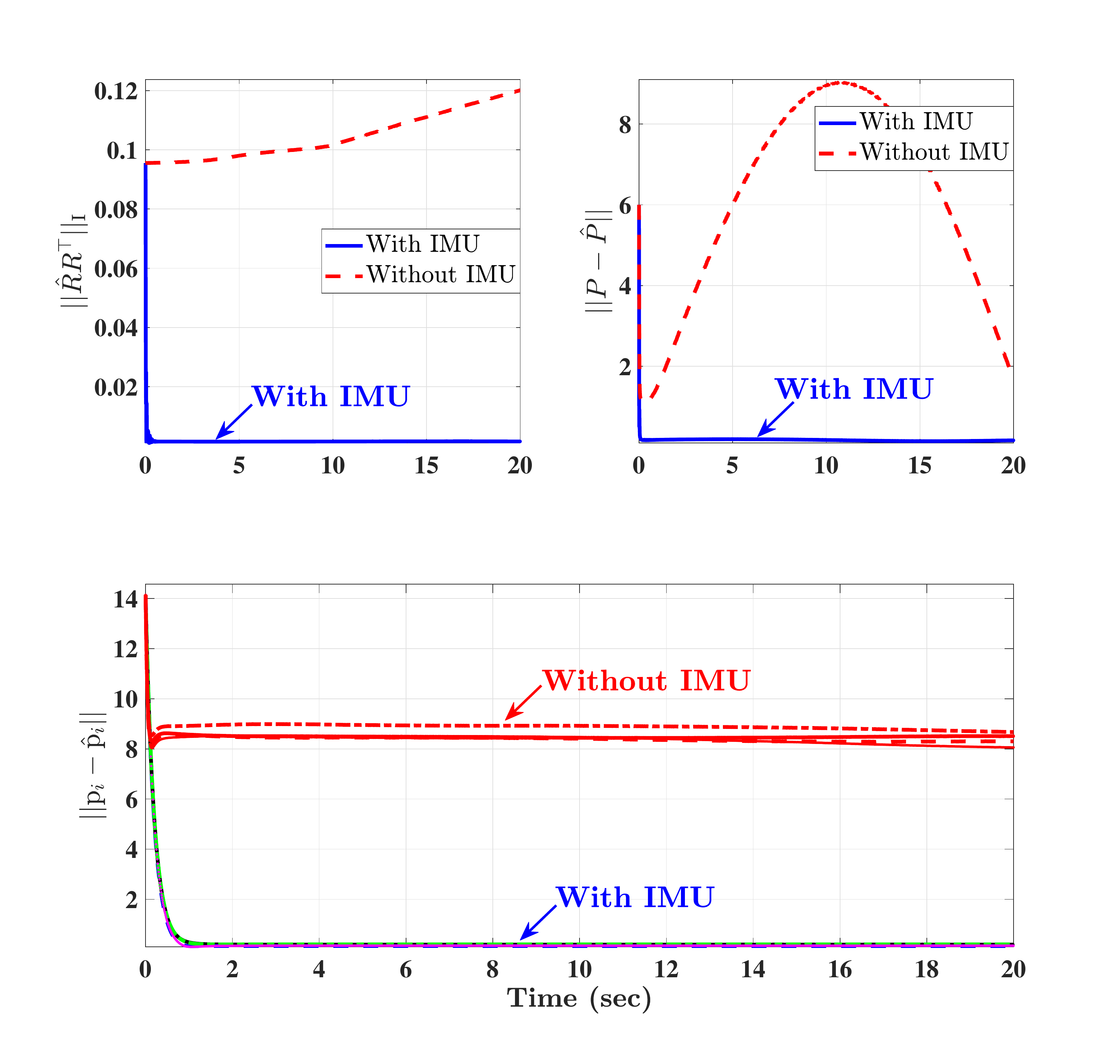}\caption{Evolution of the error trajectories of $||\hat{R}R^{\top}||_{{\rm I}}$,
		$||P-\hat{P}||$, and $||{\rm p}_{i}-{\rm \hat{p}}_{i}||$ for all
		$i=1,2,3,4$. Blue represents the proposed nonlinear filter that uses
		velocity, feature, and IMU measurements given in Subsection \ref{subsec:Det_with_IMU},
		while red corresponds to the nonlinear filter that uses velocity and
		feature measurements given in Subsection \ref{subsec:Det_without_IMU}.}
	\label{fig:SLAM_error2}
\end{figure}

While Fig. \ref{fig:SLAM_3d}, \ref{fig:SLAM_error1} and \ref{fig:SLAM_error2}
demonstrate the output performance of the proposed continuous filter
described in \eqref{eq:SLAM_T_est_dot_f2}-\eqref{eq:SLAM_p_est_dot_f2},
Fig. \ref{fig:SLAM_3d_discrete} presents its discrete counterpart
described in Algorithm \ref{alg:Alg_Disc} implemented with a sample
time of $\Delta t=0.001$ sec. The simulation of the discrete filter
utilizes the same measurements, initialization, and design parameters
introduced at the beginning of the Simulation Section with the exception
of $V=[2.5,0,0]^{\top}({\rm m/sec})$. Analogously to the continuous
filter, Fig. \ref{fig:SLAM_3d_discrete} demonstrates the superb tracking
performance of the proposed discrete nonlinear observer. In addition,
Fig. \ref{fig:SLAM_3d_discrete} reveals that the filter is computationally
cheap and can be successfully implemented using an inexpensive kit.

\begin{figure}[h!]
	\centering{}\includegraphics[scale=0.29]{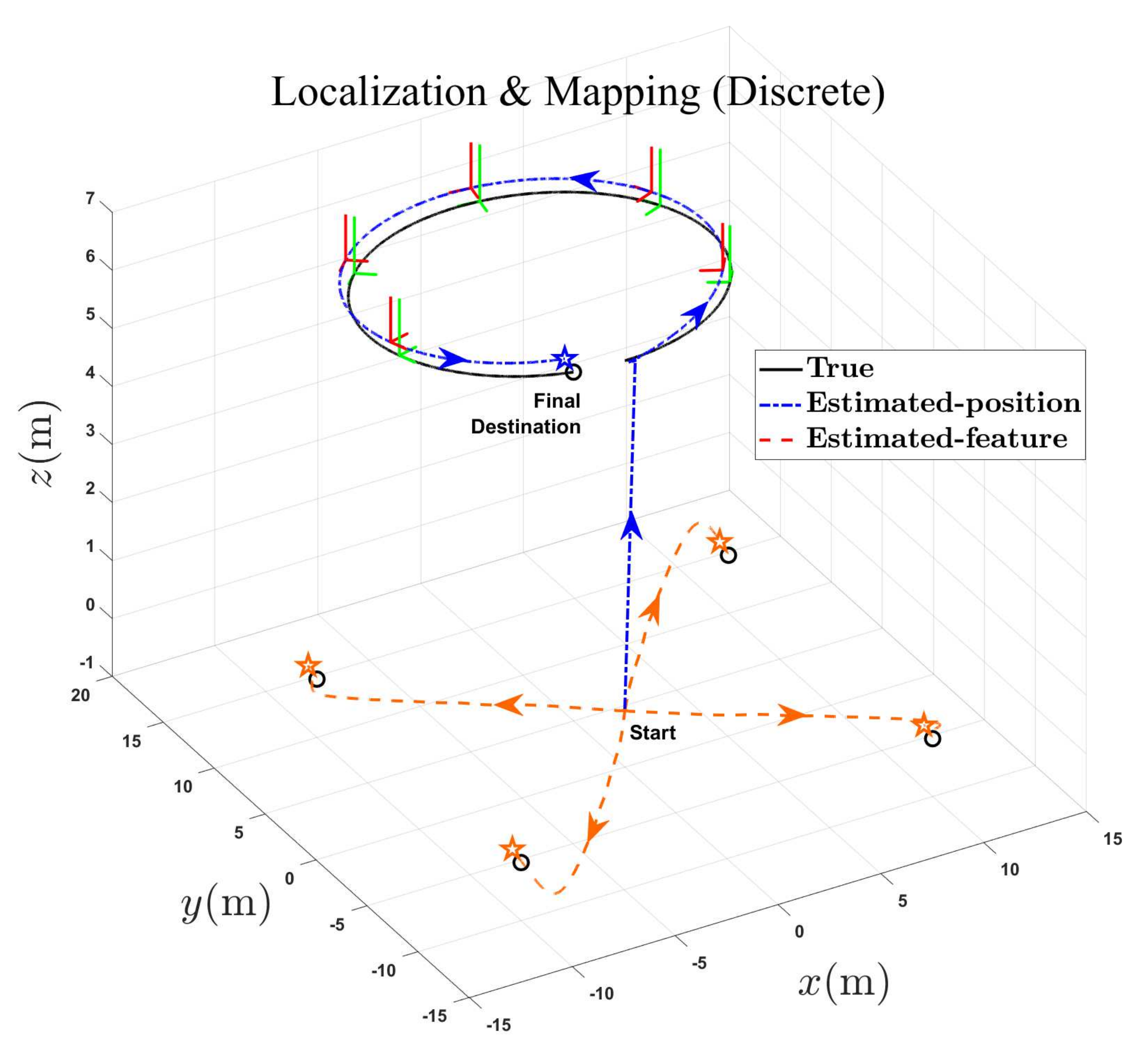}\caption{Output trajectories of the proposed nonlinear filter for SLAM described
		in Algorithm \ref{alg:Alg_Disc}. The true vehicle trajectory, and
		the vehicle and feature final destinations are plotted as a black
		solid-line, and black circle, respectively. The estimation process
		of the vehicle's trajectory, vehicle's final destination, feature
		trajectories, and feature final destinations are shown as a blue center-line,
		a blue star \textcolor{blue}{$\star$}, orange center-lines, and orange
		stars \textcolor{orange}{$\star$}, respectively. The true vehicle
		orientation is depicted as a green solid-line, while its estimation
		is plotted as a red solid-line.}
	\label{fig:SLAM_3d_discrete}
\end{figure}

To summarize, Fig. \ref{fig:SLAM_3d}, \ref{fig:SLAM_error1}, \ref{fig:SLAM_error2},
and \ref{fig:SLAM_3d_discrete} illustrate strong tracking capabilities
of the proposed filter to localize the unknown vehicle's position
and simultaneously map the unknown environment.

\section{Conclusion \label{sec:SE3_Conclusion}}

In this paper, the Simultaneous Localization and Mapping (SLAM) problem
has been addressed on the Lie group of $\mathbb{SLAM}_{n}\left(3\right)$
mimicking the nonlinear motion dynamics of the true SLAM problem.
The proposed nonlinear filter for SLAM evolved directly utilizes on
the Lie group of $\mathbb{SLAM}_{n}\left(3\right)$ utilizes the measurements
of translational and angular velocity, as well as feature and IMU
measurements. The power of the proposed approach consists in its ability
to account for the unknown bias inevitably present in velocity measurements.
As has been revealed through extensive simulation, the proposed filter
exhibits exceptional results by localizing the unknown pose of the
vehicle while simultaneously mapping the unknown environment in both
discrete and continuous time. 

\section*{Acknowledgment}

The authors would like to thank \textbf{Maria Shaposhnikova} for proofreading
the article.

\section*{Appendix\label{sec:SO3_PPF_STCH_AppendixA-1} }
\begin{center}
	\textbf{\large{}{}{}{}{}{}{}{}{}{}{}{}Quaternion Representation}{\large{}{}{}
	} 
	\par\end{center}

\noindent Let $Q=[q_{0},q^{\top}]^{\top}\in\mathbb{S}^{3}$ be a unit-quaternion
where $q_{0}\in\mathbb{R}$ and $q\in\mathbb{R}^{3}$ such that $\mathbb{S}^{3}=\{\left.Q\in\mathbb{R}^{4}\right|||Q||=\sqrt{q_{0}^{2}+q^{\top}q}=1\}$.
$Q^{-1}=[\begin{array}{cc}
q_{0} & -q^{\top}\end{array}]^{\top}\in\mathbb{S}^{3}$ stands for the inverse of $Q$. Define $\odot$ as a quaternion product,
hence, the quaternion multiplication of $Q_{1}=[\begin{array}{cc}
q_{01} & q_{1}^{\top}\end{array}]^{\top}\in\mathbb{S}^{3}$ and $Q_{2}=[\begin{array}{cc}
q_{02} & q_{2}^{\top}\end{array}]^{\top}\in\mathbb{S}^{3}$ can be represented as follows: 
\[
Q_{1}\odot Q_{2}=\left[\begin{array}{c}
q_{01}q_{02}-q_{1}^{\top}q_{2}\\
q_{01}q_{2}+q_{02}q_{1}+[q_{1}]_{\times}q_{2}
\end{array}\right]
\]
Unit-quaternion ($\mathbb{S}^{3}$) to $\mathbb{SO}\left(3\right)$
mapping can be expressed as $\mathcal{R}_{Q}:\mathbb{S}^{3}\rightarrow\mathbb{SO}\left(3\right)$
\begin{align}
\mathcal{R}_{Q} & =(q_{0}^{2}-||q||^{2})\mathbf{I}_{3}+2qq^{\top}+2q_{0}\left[q\right]_{\times}\in\mathbb{SO}\left(3\right)\label{eq:NAV_Append_SO3}
\end{align}
$Q_{{\rm I}}=[\pm1,0,0,0]^{\top}$ represents the quaternion identity
where $\mathcal{R}_{Q_{{\rm I}}}=\mathbf{I}_{3}$. More information
can be found in \cite{hashim2019AtiitudeSurvey}. Define the estimate
of $Q=[q_{0},q^{\top}]^{\top}\in\mathbb{S}^{3}$ as $\hat{Q}=[\hat{q}_{0},\hat{q}^{\top}]^{\top}\in\mathbb{S}^{3}$
with 
\[
\mathcal{R}_{\hat{Q}}=(\hat{q}_{0}^{2}-||\hat{q}||^{2})\mathbf{I}_{3}+2\hat{q}\hat{q}^{\top}+2\hat{q}_{0}\left[\hat{q}\right]_{\times}\in\mathbb{SO}\left(3\right)
\]
Recall the map in \eqref{eq:NAV_Append_SO3}. Define the map
\begin{align*}
\left[\begin{array}{c}
0\\
\mathbf{Y}(\hat{Q},x)
\end{array}\right] & =\hat{Q}\odot\left[\begin{array}{c}
0\\
x
\end{array}\right]\odot\hat{Q}^{-1}\\
\left[\begin{array}{c}
0\\
\mathbf{Y}(\hat{Q}^{-1},x)
\end{array}\right] & =\hat{Q}^{-1}\odot\left[\begin{array}{c}
0\\
x
\end{array}\right]\odot\hat{Q}
\end{align*}
with $\mathbf{Y}(\hat{Q},y_{i})\in\mathbb{R}^{3}$, $x\in\mathbb{R}^{3}$,
and $\hat{Q}\in\mathbb{S}^{3}$. Let us reformulate the observer in
\eqref{eq:SLAM_T_est_dot_f2}, \eqref{eq:SLAM_p_est_dot_f2}, \eqref{eq:SLAM_b_est_dot_f2},
and \eqref{eq:SLAM_W_f2} along with its implementation steps in terms
of unit-quaternion: 
\[
\begin{cases}
e_{i} & =\hat{{\rm p}}_{i}-\mathbf{Y}\left(\hat{Q},y_{i}\right)-\hat{P},\hspace{1em}i=1,2,\ldots,n\\
\boldsymbol{\Upsilon}(\tilde{R}M) & =\mathbf{Y}\left(\hat{Q},\sum_{j=1}^{n_{{\rm R}}}\left(\frac{s_{j}}{2}\hat{\upsilon}_{j}^{a}\times\upsilon_{j}^{a}\right)\right)\\
\tau_{R} & =\underline{\lambda}(\breve{\mathbf{M}})\times(1+\pi(\tilde{R},M))\\
\chi & =\Omega_{m}-\hat{b}_{\Omega}-W_{\Omega}\\
\dot{\hat{Q}} & =\frac{1}{2}\left[\begin{array}{cc}
0 & -\chi^{\top}\\
\chi & -\left[\chi\right]_{\times}
\end{array}\right]\hat{Q},\hspace{1em}\hat{Q}(0)=Q_{{\rm I}}\\
\dot{\hat{P}} & =\mathbf{Y}\left(\hat{Q},V_{m}-\hat{b}_{V}-W_{V}\right)\\
\dot{{\rm \hat{p}}}_{i} & =-k_{1}e_{i}+\mathbf{Y}\left(\hat{Q},[y_{i}]_{\times}W_{\Omega}\right)\\
\dot{\hat{b}}_{\Omega} & =\frac{\Gamma_{1}}{2}\mathbf{Y}\left(\hat{Q}^{-1},\boldsymbol{\Upsilon}(\tilde{R}M)\right)\\
& \hspace{1em}-\sum_{i=1}^{n}\frac{\Gamma_{1}}{\alpha_{i}}[y_{i}]_{\times}\mathbf{Y}\left(\hat{Q}^{-1},e_{i}\right)\\
\dot{\hat{b}}_{V} & =-\sum_{i=1}^{n}\frac{\Gamma_{2}}{\alpha_{i}}\mathbf{Y}\left(\hat{Q}^{-1},e_{i}\right)\\
W_{\Omega} & =\frac{k_{w}}{\tau_{R}}\mathbf{Y}\left(\hat{Q}^{-1},\boldsymbol{\Upsilon}(\tilde{R}M)\right)\\
W_{V} & =-\sum_{i=1}^{n}\frac{k_{2}}{\alpha_{i}}\mathbf{Y}\left(\hat{Q}^{-1},e_{i}\right)
\end{cases}
\]

\bibliographystyle{IEEEtran}
\bibliography{bib_SLAM}

\begin{thebibliography}{10}
\providecommand{\url}[1]{#1}
\csname url@samestyle\endcsname
\providecommand{\newblock}{\relax}
\providecommand{\bibinfo}[2]{#2}
\providecommand{\BIBentrySTDinterwordspacing}{\spaceskip=0pt\relax}
\providecommand{\BIBentryALTinterwordstretchfactor}{4}
\providecommand{\BIBentryALTinterwordspacing}{\spaceskip=\fontdimen2\font plus
\BIBentryALTinterwordstretchfactor\fontdimen3\font minus
  \fontdimen4\font\relax}
\providecommand{\BIBforeignlanguage}[2]{{%
\expandafter\ifx\csname l@#1\endcsname\relax
\typeout{** WARNING: IEEEtran.bst: No hyphenation pattern has been}%
\typeout{** loaded for the language `#1'. Using the pattern for}%
\typeout{** the default language instead.}%
\else
\language=\csname l@#1\endcsname
\fi
#2}}
\providecommand{\BIBdecl}{\relax}
\BIBdecl

\bibitem{thrun2002robotic}
S.~Thrun \emph{et~al.}, ``Robotic mapping: A survey,'' \emph{Exploring
  artificial intelligence in the new millennium}, vol.~1, no. 1-35, p.~1, 2002.

\bibitem{hashim2019SE3Det}
H.~A. Hashim, L.~J. Brown, and K.~McIsaac, ``Nonlinear pose filters on the
  special euclidean group {SE}(3) with guaranteed transient and steady-state
  performance,'' \emph{IEEE Transactions on Systems, Man, and Cybernetics:
  Systems}, pp. 1--14, 2019.

\bibitem{zlotnik2018higher}
D.~E. Zlotnik and J.~R. Forbes, ``Higher order nonlinear complementary
  filtering on lie groups,'' \emph{IEEE Transactions on Automatic Control},
  vol.~64, no.~5, pp. 1772--1783, 2018.

\bibitem{hashim2020SE3Stochastic}
H.~A. Hashim and F.~L. Lewis, ``Nonlinear stochastic estimators on the special
  euclidean group {SE}(3) using uncertain imu and vision measurements,''
  \emph{IEEE Transactions on Systems, Man, and Cybernetics: Systems}, vol.~PP,
  no.~PP, pp. PP--PP, 2020.

\bibitem{choset2000SLAM}
H.~Choset, S.~Walker, K.~Eiamsa-Ard, and J.~Burdick, ``Sensor-based
  exploration: Incremental construction of the hierarchical generalized voronoi
  graph,'' \emph{The International Journal of Robotics Research}, vol.~19,
  no.~2, pp. 126--148, 2000.

\bibitem{durrant2006simultaneous}
H.~Durrant-Whyte and T.~Bailey, ``Simultaneous localization and mapping: part
  i,'' \emph{IEEE robotics \& automation magazine}, vol.~13, no.~2, pp.
  99--110, 2006.

\bibitem{bekris2006evaluation}
K.~E. Bekris, M.~Glick, and L.~E. Kavraki, ``Evaluation of algorithms for
  bearing-only slam,'' in \emph{Proceedings 2006 IEEE International Conference
  on Robotics and Automation, 2006. ICRA 2006.}\hskip 1em plus 0.5em minus
  0.4em\relax IEEE, 2006, pp. 1937--1943.

\bibitem{davison2007monoslam}
A.~J. Davison, I.~D. Reid, N.~D. Molton, and O.~Stasse, ``Monoslam: Real-time
  single camera slam,'' \emph{IEEE transactions on pattern analysis and machine
  intelligence}, vol.~29, no.~6, pp. 1052--1067, 2007.

\bibitem{zlotnik2018SLAM}
D.~E. Zlotnik and J.~R. Forbes, ``Gradient-based observer for simultaneous
  localization and mapping,'' \emph{IEEE Transactions on Automatic Control},
  vol.~63, no.~12, pp. 4338--4344, 2018.

\bibitem{liu2018brain}
Y.~Liu, Z.~Li, T.~Zhang, and S.~Zhao, ``Brain-robot interface-based navigation
  control of a mobile robot in corridor environments,'' \emph{IEEE Transactions
  on Systems, Man, and Cybernetics: Systems}, 2018.

\bibitem{maurovic2017path}
I.~Maurovic, M.~Seder, K.~Lenac, and I.~Petrovic, ``Path planning for active
  slam based on the d* algorithm with negative edge weights,'' \emph{IEEE
  Transactions on Systems, Man, and Cybernetics: Systems}, vol.~48, no.~8, pp.
  1321--1331, 2017.

\bibitem{hashim2020SLAMLetter}
H.~A. Hashim, ``Guaranteed performance nonlinear observer for simultaneous
  localization and mapping,'' \emph{IEEE Control Systems Letters}, vol.~5,
  no.~1, pp. 91--96, 2021.

\bibitem{yuan2019multisensor}
W.~Yuan, Z.~Li, and C.-Y. Su, ``Multisensor-based navigation and control of a
  mobile service robot,'' \emph{IEEE Transactions on Systems, Man, and
  Cybernetics: Systems}, 2019.

\bibitem{montemerlo2007fastslam}
M.~Montemerlo and S.~Thrun, \emph{FastSLAM: A scalable method for the
  simultaneous localization and mapping problem in robotics}.\hskip 1em plus
  0.5em minus 0.4em\relax Springer, 2007, vol.~27.

\bibitem{kaess2008isam}
M.~Kaess, A.~Ranganathan, and F.~Dellaert, ``isam: Incremental smoothing and
  mapping,'' \emph{IEEE Transactions on Robotics}, vol.~24, no.~6, pp.
  1365--1378, 2008.

\bibitem{huang2007convergence}
S.~Huang and G.~Dissanayake, ``Convergence and consistency analysis for
  extended kalman filter based slam,'' \emph{IEEE Transactions on robotics},
  vol.~23, no.~5, pp. 1036--1049, 2007.

\bibitem{chatterjee2007neuro}
A.~Chatterjee and F.~Matsuno, ``A neuro-fuzzy assisted extended kalman
  filter-based approach for simultaneous localization and mapping (slam)
  problems,'' \emph{IEEE transactions on fuzzy systems}, vol.~15, no.~5, pp.
  984--997, 2007.

\bibitem{zhang2017EKF_SLAM}
T.~Zhang, K.~Wu, J.~Song, S.~Huang, and G.~Dissanayake, ``Convergence and
  consistency analysis for a 3-d invariant-ekf slam,'' \emph{IEEE Robotics and
  Automation Letters}, vol.~2, no.~2, pp. 733--740, 2017.

\bibitem{dissanayake2011review}
G.~Dissanayake, S.~Huang, Z.~Wang, and R.~Ranasinghe, ``A review of recent
  developments in simultaneous localization and mapping,'' in \emph{2011 6th
  International Conference on Industrial and Information Systems}.\hskip 1em
  plus 0.5em minus 0.4em\relax IEEE, 2011, pp. 477--482.

\bibitem{cadena2016past}
C.~Cadena, L.~Carlone, H.~Carrillo, Y.~Latif, D.~Scaramuzza, J.~Neira, I.~Reid,
  and J.~J. Leonard, ``Past, present, and future of simultaneous localization
  and mapping: Toward the robust-perception age,'' \emph{IEEE Transactions on
  robotics}, vol.~32, no.~6, pp. 1309--1332, 2016.

\bibitem{lee2012exponential}
T.~Lee, ``Exponential stability of an attitude tracking control system on so
  (3) for large-angle rotational maneuvers,'' \emph{Systems \& Control
  Letters}, vol.~61, no.~1, pp. 231--237, 2012.

\bibitem{grip2012attitude}
H.~F. Grip, T.~I. Fossen, T.~A. Johansen, and A.~Saberi, ``Attitude estimation
  using biased gyro and vector measurements with time-varying reference
  vectors,'' \emph{IEEE Transactions on Automatic Control}, vol.~57, no.~5, pp.
  1332--1338, 2012.

\bibitem{hashim2018SO3Stochastic}
H.~A. Hashim, L.~J. Brown, and K.~McIsaac, ``Nonlinear stochastic attitude
  filters on the special orthogonal group 3: Ito and stratonovich,'' \emph{IEEE
  Transactions on Systems, Man, and Cybernetics: Systems}, vol.~49, no.~9, pp.
  1853--1865, 2019.

\bibitem{hashim2019SO3Wiley}
H.~A. Hashim, ``Systematic convergence of nonlinear stochastic estimators on
  the special orthogonal group {SO}(3),'' \emph{International Journal of Robust
  and Nonlinear Control}, vol.~30, no.~10, pp. 3848--3870, 2020.

\bibitem{strasdat2012local}
H.~Strasdat, ``Local accuracy and global consistency for efficient visual
  slam,'' Ph.D. dissertation, Department of Computing, Imperial College London,
  2012.

\bibitem{johansen2016globally}
T.~A. Johansen and E.~Brekke, ``Globally exponentially stable kalman filtering
  for slam with ahrs,'' in \emph{2016 19th International Conference on
  Information Fusion (FUSION)}.\hskip 1em plus 0.5em minus 0.4em\relax IEEE,
  2016, pp. 909--916.

\bibitem{hashim2020AtiitudeSurvey}
H.~A. Hashim, ``Attitude determination and estimation using vector
  observations: Review, challenges and comparative results,'' \emph{ar{X}iv
  preprint ar{X}iv:2001.03787}, 2020.

\bibitem{lee2006SLAM_observability}
K.~W. Lee, W.~S. Wijesoma, and J.~I. Guzman, ``On the observability and
  observability analysis of slam,'' in \emph{2006 IEEE/RSJ International
  Conference on Intelligent Robots and Systems}.\hskip 1em plus 0.5em minus
  0.4em\relax IEEE, 2006, pp. 3569--3574.

\bibitem{bullo2004geometric}
F.~Bullo and A.~D. Lewis, \emph{Geometric control of mechanical systems:
  modeling, analysis, and design for simple mechanical control systems}.\hskip
  1em plus 0.5em minus 0.4em\relax Springer Science \& Business Media, 2004,
  vol.~49.

\bibitem{hashim2019AtiitudeSurvey}
H.~A. Hashim, ``Special orthogonal group {SO}(3), euler angles, angle-axis,
  rodriguez vector and unit-quaternion: Overview, mapping and challenges,''
  \emph{ar{X}iv preprint ar{X}iv:1909.06669}, 2019.

\end{thebibliography}

\vspace{310pt}

\section*{AUTHOR INFORMATION}
\vspace{10pt}

	{\bf Hashim A. Hashim} (Member, IEEE) is an Assistant Professor with the Department of Engineering and Applied Science, Thompson Rivers University, Kamloops, British Columbia, Canada. He received the B.Sc. degree in Mechatronics, Department of Mechanical Engineering from Helwan University, Cairo, Egypt, the M.Sc. in Systems and Control Engineering, Department of Systems Engineering from King Fahd University of Petroleum \& Minerals, Dhahran, Saudi Arabia, and the Ph.D. in Robotics and Control, Department of Electrical and Computer Engineering at Western University, Ontario, Canada.\\
	His current research interests include stochastic and deterministic attitude and pose filters, Guidance, navigation and control, simultaneous localization and mapping, control of multi-agent systems, and optimization techniques.
	
\underline{Contact Information}: \href{mailto:hhashim@tru.ca}{hhashim@tru.ca}.
\vspace{50pt}

	{\bf Abdelrahman E.E. Eltoukhy} received his BSc Degree in Production Engineering from Helwan University, Egypt, and obtained his MSc in Engineering and Management from the Politecnico Di Torino, Italy. He obtained his PhD degree from The Hong Kong Polytechnic University, Hong Kong. He is currently a Research Assistant Professor in Industrial and Systems Engineering department, The Hong Kong Polytechnic University, Hong Kong.\\
	His current research interests include airline schedule planning, logistics and supply chain management, operations research, and simulation.

\end{document}